\documentclass[11pt,letterpaper]{article}
\usepackage[margin=1in]{geometry}
\usepackage{amsmath,amssymb,amsthm}
\usepackage{booktabs}
\usepackage{array}
\usepackage{tabularx}
\usepackage{hyperref}
\usepackage{enumitem}
\usepackage{graphicx}
\usepackage{xcolor}
\usepackage{multirow}
\usepackage{tikz}
\usepackage{pgfplots}
\pgfplotsset{compat=1.17}

\hypersetup{colorlinks=true,linkcolor=blue,citecolor=blue,urlcolor=blue}

\newtheorem{definition}{Definition}[section]
\newtheorem{proposition}{Proposition}[section]
\newtheorem{theorem}{Theorem}[section]
\newtheorem{corollary}{Corollary}[section]
\newtheorem{thesis}{Thesis}[section]

\title{\textbf{Governing What You Cannot Observe:\\Adaptive Runtime Governance for Autonomous AI Agents}}
\author{Germ\'an Mar\'in\textsuperscript{1}, Jatin Chaudhary\textsuperscript{2}\\[0.5em]
\textsuperscript{1}\texttt{gmarin@cosasocial.com}\\
\textsuperscript{2}Department of Computing, University of Turku, Turku, Finland\\
\textsuperscript{2}\texttt{jatin.chaudhary@utu.fi}}
\date{April 2026}

\begin{document}
\maketitle

\begin{abstract}
Autonomous AI agents can remain fully authorized and still become unsafe as behavior drifts, adversaries adapt, and decision patterns shift without any code change. We propose the \textbf{Informational Viability Principle}: governing an agent reduces to estimating a bound on unobserved risk $\hat{B}(x) = U(x) + SB(x) + RG(x)$ and allowing an action only when its capacity $S(x)$ exceeds $\hat{B}(x)$ by a safety margin. The \textbf{Agent Viability Framework}, grounded in Aubin's viability theory, establishes three properties---monitoring (P1), anticipation (P2), and monotonic restriction (P3)---as individually necessary and collectively sufficient for documented failure modes. \textbf{RiskGate} instantiates the framework with dedicated statistical estimators (KL divergence, segment-vs-rest $z$-tests, sequential pattern matching), a fail-secure monotonic pipeline, and a closed-loop Autopilot formalised as an instance of Aubin's regulation map with kill-switch-as-last-resort; a scalar Viability Index $VI(t) \in [-1,+1]$ with first-order $t^*$ prediction transforms governance from reactive to predictive. Contributions are the theoretical framework, the reference implementation, and analytical coverage against published agent-failure taxonomies; quantitative empirical evaluation is scoped as follow-up work.
\end{abstract}

\noindent\textbf{Keywords:} AI governance, autonomous agents, viability theory, exit function, regulation map, viability kernel, unobserved risk estimation, statistical drift detection, multi-armed bandits, KL divergence, algorithmic fairness, Bayesian learning, LLM safety, predictive governance.

\section{Introduction}\label{sec:introduction}

Today's agent platforms, AWS, Microsoft, Google, control permissions, tool access, logging, and policy enforcement. That is important, but it is not enough. An agent can remain fully authorized and still become unsafe over time.

The deployment of large language model (LLM) agents in production environments has accelerated dramatically. Unlike traditional software, LLM agents exhibit three properties that make conventional governance insufficient~\cite{gaurav2025}:

\begin{enumerate}[label=(\roman*)]
\item \textbf{Behavioral non-stationarity.} The distribution of an agent's actions can shift silently over time due to model updates, prompt drift, or adversarial manipulation, without any explicit code change.
\item \textbf{Compositional risk.} Individual operations may each appear benign, yet their composition---a sequence of small transfers below a regulatory threshold---constitutes fraud.
\item \textbf{Emergent discrimination.} Adaptive learning systems can amplify initial data imbalances into systematic bias against population segments.
\end{enumerate}

Recent empirical evidence confirms the severity. Shapira et al.~\cite{shapira2026}, and Gaurav et al.
~\cite{gaurav2025} deployed autonomous LLM agents in a live laboratory environment and documented eleven failure case studies over a two-week red-teaming period, including agents entering resource-consuming infinite loops lasting nine days, complying with unauthorized requests that disclosed 124 email records, accepting spoofed identities leading to full system compromise, and enabling cross-agent corruption through shared communication channels. Indirect prompt injection---present in over 73\% of evaluated production deployments~\cite{nassi2025}, can manipulate agents into authorizing operations far beyond their intended scope.

The regulatory pressure is equally concrete. The EU AI Act (2024/1689) imposes fines of up to 35M~EUR or 7\% of global revenue for non-compliant high-risk AI systems. Basel, PSD2, HIPAA, and Solvency~II each demand quantifiable risk management, bias monitoring, and full decision traceability.

Existing approaches fall into three categories with fundamental limitations. Rule-based guardrails cannot detect distributional drift and are trivially evaded by sophisticated adversaries. LLM-based evaluators introduce latency, cost, and the recursive problem of governing the governor. Static policy engines, such as Amazon Bedrock AgentCore's Cedar-based authorization~\cite{agentcore2025}, provide strong per-request guarantees but are fundamentally deterministic: they evaluate each action as a stateless tuple with no memory of prior behavior, no distributional context, and no capacity to detect that the agent's operational patterns are drifting over time.

RiskGate bridges this gap. But more than a system, this paper offers a theoretical reframing of the governance problem itself:

\begin{quote}
\emph{Governing an autonomous agent is equivalent to continuously estimating the bound on what you cannot observe, $\hat{B}(x)$, and acting only when your observed capacity $S(x)$ exceeds it.}
\end{quote}

This principle, the \textbf{Informational Viability Principle}, unifies all statistical estimators, the AVF's three properties, and the Viability Index under a single conceptual structure. It provides a unified motivation for why P1, P2, and P3 arise as the natural governance requirements for any system that explicitly separates $S(x)$ from $\hat{B}(x)$ as estimable quantities.

\paragraph{Position in the governance stack.} RiskGate is complementary to production agent platforms such as Amazon Bedrock AgentCore~\cite{agentcore2025}, which enforce static, identity-centric authorization via Cedar policies~\cite{cedar2024} and answer \emph{``is this specific action allowed right now?''}. RiskGate targets an orthogonal question, \emph{``is $\hat{B}(x)$ growing?''}, that static policy engines do not address by design. The two operate at different layers and are not like-for-like: platforms like AgentCore provide operational maturity, IAM, and scale that a research-stage framework cannot match, while RiskGate provides adaptive $\hat{B}(x)$ estimation and the Viability Index that static policy engines do not target. A plausible integration is to run RiskGate as governance middleware on top of such a platform, which we flag as future work rather than a competitive claim.

\subsection{Contributions}

\begin{enumerate}
\item \textbf{The Informational Viability Principle.} We establish that governing an autonomous agent reduces to estimating $\hat{B}(x) = U(x) + SB(x) + RG(x)$ and enforcing $\hat{\Phi}(x) = S(x) - \hat{B}(x) \geq \varepsilon$. This principle provides a unified motivation for P1--P3 as natural governance requirements under epistemic constraints, and a selection criterion for any future governance mechanism (sections~2 and~2.5). We do not claim that $\hat{B}(x) = U + SB + RG$ is exhaustive; it is a \emph{sufficient decomposition for the failure modes currently documented} in the agent-failure taxonomies we validate against~\cite{shapira2026,rath2026}. The framework admits extension to additional terms $T_i$ as new modes are documented, preserving the Informational Viability Principle.

\item \textbf{A governance pipeline with by-construction monotonicity.} A pipeline where each stage can only restrict, never relax, prior decisions (section~6), yielding $D_{\text{final}} = \bigwedge_i D_i$. Monotonicity under sequential evaluation holds by construction (Proposition~\ref{thm:monotonicity}); a formal treatment under concurrency and partial stage failures is future work.

\item \textbf{Multi-channel KL divergence drift detection.} Channel-parallel architecture implementing the $U(x)$ estimator with per-channel Monte-Carlo calibration and joint FPR control via Bonferroni/\v{S}id\'ak corrections; the reference implementation deploys two channels (transaction outcomes and tool calls), with latency, cost/consumption, and outcome/error-code channels flagged as natural $K{+}1$ extensions (sections~5.2 and~\ref{sec:limitations}).

\item \textbf{Discounted UCB1 for adaptive threshold optimization.} The system learns which threshold best balances false positives against missed anomalies in non-stationary environments (section~5.2).

\item \textbf{Algorithmic bias detection with segment-vs-rest correction.} A $z$-test of two proportions implementing the $SB(x)$ estimator (section~5.2).

\item \textbf{Agent Viability Framework (AVF).} Grounded in Aubin's viability theory, we identify four categories of agent degradation, motivate three governance properties from the $\hat{B}(x)$ problem structure, and implement AVF as $VI(t) \in [-1, +1]$ operationalizing $\hat{\Phi}(x)$ with region classification and $t^*$ prediction (section~3). We furthermore establish that $t^*$ is a first-order, conservative estimator of Aubin's \emph{exit function} $\tau_{\mathrm{exit}}(\theta)$---formalised via a $VI$-based scalar surrogate that upper-bounds the $V$-exit---under a local-linearity assumption on the mean viability trajectory, giving the predictive component of the AVF a formal pedigree in viability theory rather than treating it as an ad~hoc OLS extrapolation; the non-linearity diagnostic $\rho$ is correspondingly reinterpreted as a detector of when this local-linear approximation ceases to be valid (section~\ref{sec:tstar_exit}).

\item \textbf{Autopilot as regulation map.} We formalise the closed-loop Autopilot as an explicit instance of Aubin's \emph{regulation map} $R_{A}(\theta) = \{u' \leq u_{\mathrm{current}}\} \cup \{\mathrm{STOP}\}$ (section~\ref{sec:autopilot_regulation}), with tighten-only admissibility and kill-switch activation as admissible-control-of-last-resort. Theorem~\ref{thm:viab_ra_nonempty} establishes that the regulated viability kernel $\mathrm{Viab}_{R_{A}}(V^{\dagger}) \neq \emptyset$ by construction, so the Autopilot is never stuck without an admissible action. P3 (monotonic restriction) is recovered as Corollary~\ref{cor:p3_regulation} of how $R_{A}$ is instantiated, rather than imposed as an external axiom. The stronger claim of $V$-interior viability without appeal to $\mathrm{STOP}$ is flagged explicitly as an operating hypothesis requiring a tightening-dominance condition on drift rates; empirical verification is scoped as follow-up work.

\item \textbf{Full implementation and analytical validation.} A Python library with analytical coverage arguments against the Agents of Chaos failure taxonomy~\cite{shapira2026} and end-to-end mathematical traces of two attack scenarios (sections~9 and~11). A quantitative empirical evaluation is scoped as future work.
\end{enumerate}

\section{The Informational Viability Principle: A Unified Foundation}

This paper makes two contributions that serve different audiences. Before presenting either, we establish the theoretical principle that unifies them.

\subsection{The Core Insight}

Let $x = (a, \theta(t))$ denote the evaluation context at time~$t$, where $a$ is the proposed action and $\theta(t)$ is the current behavioral state of the governance system (defined formally in Definition~\ref{def:behavioral_state}). Consider what any governance system must do: evaluate each action~$x$ and decide \emph{allow} or \emph{block}. To do this, it needs two things:

\begin{itemize}
\item An estimate of what it \emph{can} observe: the agent's capacity to act correctly given available evidence. We call this $S(x)$---the \textbf{internal capacity estimator}.
\item An estimate of what it \emph{cannot} observe: the risk that comes from hidden dynamics, distribution shifts, demographic disparities, and sequential patterns invisible at the individual action level. We call this $\hat{B}(x)$---the \textbf{bound on unobserved risk}.
\end{itemize}

The decision rule is then:
\begin{equation}
\text{Allow action } x \iff S(x) \geq \hat{B}(x) + \varepsilon
\label{eq:ivl}
\end{equation}

This is the \textbf{Informational Viability Principle}. We emphasize that eq.~(\ref{eq:ivl}) is a \emph{design principle}, not a theorem derived from first principles: $S(x)$ and $\hat{B}(x)$ are not independently given quantities but are defined by the estimators this paper constructs (sections~4--5), so the principle formalizes a \emph{framework choice}---to separate observed capacity from unobserved risk as distinct estimable objects---rather than asserting a law of nature. We use $\hat{B}(x)$ to denote an estimated upper bound on unobserved risk---an \emph{estimate} because it is computed from finite samples, and a \emph{bound} because it is designed to be conservative: we prefer to overestimate unobserved risk (accepting some false positives) rather than underestimate it (accepting safety violations). This conservative design philosophy is formalized in the reward asymmetry of section~5.2 ($r_{FP} = -2.0$ vs.\ $r_{FN} = -1.0$) and in~P3.

Among decision rules that make explicit the epistemic distinction between observed capacity and unobserved risk, eq.~(\ref{eq:ivl}) provides a natural operationalization. Alternative frameworks---minimax robustness, distributionally robust optimization, stochastic control---address related problems but do not explicitly separate $S(x)$ from $\hat{B}(x)$ as independently estimable quantities. We adopt this formulation because it maps directly to implementable statistical estimators, as detailed in section~5.

\subsection{Why Three Terms in $\hat{B}(x)$}

$\hat{B}(x)$ cannot be estimated as a single monolithic quantity---unobserved risk has fundamentally different sources that require different statistical instruments. We identify three terms that cover the failure modes documented in current agent-failure taxonomies:

\paragraph{$U(x)$ --- Uncertainty.} The component of unobserved risk from behavioral drift: the agent's distribution has shifted from its reference in ways not yet fully captured by the current observation window. $U(x)$ is estimable via information-theoretic divergence measures applied to sliding windows of recent behavior.

\paragraph{$SB(x)$ --- Structural Bias.} The component from systematic disparities in how the governance system treats different population segments. Even when each individual decision looks correct, the aggregate pattern may discriminate. $SB(x)$ is estimable via statistical hypothesis testing across demographic segments.

\paragraph{$RG(x)$ --- Reality Gap.} The component from the gap between how an action looks in isolation and what it means as part of a sequence. A single \$4,800 transfer is normal; five consecutive transfers of \$4,800 to the same destination are structuring fraud. $RG(x)$ is estimable via pattern matching over action histories.

We conjecture that these three terms are jointly sufficient to cover the failure modes documented in production deployments of LLM agents: any governance failure that cannot be detected by static per-request rules manifests as an increase in at least one of $U(x)$, $SB(x)$, or $RG(x)$. This conjecture is supported constructively in section~3.10 via coverage mapping against the four degradation categories, and empirically in section~9 via the Agents of Chaos taxonomy. Formal completeness with respect to an exhaustive threat model remains an open question.

\paragraph{Scope of the decomposition.} The completeness of $\hat{B}(x) = U + SB + RG$ is a \emph{conjecture motivated by the failure-mode literature, not a theorem}. The three terms were chosen because they (i) correspond to distinct statistical instruments already well-developed (information-theoretic divergence, proportion tests, sequence pattern matching), and (ii) jointly cover every non-trivial failure mode in the taxonomies we validate against. Emergent failure modes that may not reduce cleanly to these three terms---such as instrumental convergence, goal misgeneralisation, or certain forms of deceptive alignment---remain candidates for \emph{additional} terms $T_i$ rather than a re-derivation of the existing ones. Such extensions do not break the Informational Viability Principle: the decision rule $S(x) \geq \sum_i T_i(x) + \varepsilon$ accommodates any finite list of non-observed-risk components, and P1--P3 continue to apply component-wise. We therefore present the three-term decomposition as \emph{the best current instantiation} of the principle, not as its final form.

\subsection{How This Unifies the Statistical Estimators}

The statistical estimators of RiskGate are not independent mechanisms---they are implementations of two quantities:

\begin{itemize}
\item $S(x)$ is estimated by Bayesian behavioral profiling (section~5.1.1) and the post-hoc quality evaluator (section~5.1.2).
\item $\hat{B}(x) = U(x) + SB(x) + RG(x)$ is estimated by dual-channel KL divergence, D-UCB1 adaptive thresholds, and the kill switch (for $U(x)$; sections~5.2.1--5.2.3); the segment-vs-rest $z$-test (for $SB(x)$; section~5.2.4); and PlanGovernor (for $RG(x)$; section~7).
\end{itemize}

This reframing has a practical consequence: when evaluating whether to add a new governance mechanism, the question is no longer ``does this add value?'' but ``which term of $\hat{B}(x)$ does this improve, and by how much?''

\subsection{How This Motivates the AVF Properties}

The three AVF properties---P1, P2, P3---have typically been introduced as governance axioms in prior agent safety frameworks. The $\hat{B}(x)$ principle provides a unified motivation for why these three properties arise naturally:

\begin{itemize}
\item \textbf{P1 (Continuous monitoring)} is necessary because $\hat{B}(x)$ cannot be estimated from a single observation. $U(x)$ requires a sliding window; $SB(x)$ requires accumulated segment counts; $RG(x)$ requires session history. Without P1, the system implicitly estimates $\hat{B}(x) = 0$ for unobserved components, artificially inflating~$\hat{\Phi}(x)$.

\item \textbf{P2 (Anticipation)} is necessary because preventing violations in high-consequence domains requires monitoring $d\hat{B}/dt$ before $\hat{B}(x)$ exceeds $S(x) + \varepsilon$. A purely reactive system detects violations only after harm has occurred.

\item \textbf{P3 (Monotonic restriction)} is motivated by the need to preserve the integrity of $\hat{B}(x)$ estimates under adversarial pressure. Without P3, an attacker can induce threshold relaxation, causing the system to believe $\hat{\Phi}(x) \geq \varepsilon$ when it is not. This is a condition on the \emph{integrity} of $\hat{B}(x)$ estimates rather than on their computability---a distinct but equally fundamental requirement.
\end{itemize}

\subsection{Theoretical Contribution of $\hat{B}(x)$}

Prior formulations of agent governance frameworks identify necessary governance properties (continuous monitoring, anticipation, monotonic restriction) and implement them through independent statistical mechanisms. What remained implicit is \emph{why} these particular properties, and not others, constitute a complete governance framework.

This paper makes that implicit structure explicit. We show that all governance failures that escape static per-request rules share a common cause: the governance system is making decisions under \emph{unobserved risk}---risk that cannot be computed from a single action but requires accumulated behavioral history, cross-segment comparison, or sequential context. We formalize this as $\hat{B}(x)$, the bound on unobserved risk, decomposed into three universal terms: $U(x)$ (distributional drift), $SB(x)$ (structural bias), and $RG(x)$ (sequential context gap).

This decomposition has three consequences. First, it provides a \textbf{selection criterion} for governance mechanisms: a proposed mechanism is justified if and only if it reduces uncertainty in at least one term of~$\hat{B}(x)$. Second, it \textbf{motivates rather than axiomatizes} the three AVF properties: P1 is necessary because $\hat{B}(x)$ requires accumulated state to be estimated; P2 is necessary because $d\hat{B}/dt > 0$ predicts imminent violation before it occurs; P3 is necessary to preserve the integrity of $\hat{B}(x)$ estimates under adversarial observation manipulation. Third, it shows that the Viability Index $VI(t)$ is not merely an aggregate health metric but a direct operationalization of $\hat{\Phi}(x) = S(x) - \hat{B}(x)$---the margin between observed capacity and unobserved risk.

The central claim, stated precisely: \textbf{$\hat{B}(x)$ does not add new mechanisms---it provides the reason why the existing mechanisms are the correct ones.}

\section{Agent Viability Framework (AVF)}

\subsection{From Reactive to Predictive Governance}

The AVF addresses the question that matters most in production: \emph{how close is the system, as a whole, to violating any governance constraint, and when will it cross?} It does so by grounding the $\hat{B}(x)$ principle in Aubin's viability theory~\cite{aubin1991,aubin2011}, which studies the evolution of dynamical systems within constrained sets.

\subsection{Agent Senescence: Four Categories of Degradation}

\begin{definition}[Agent senescence]
An autonomous agent exhibits senescence over interval $[t_0, t_1]$ if the distance from its behavioral state to the boundary of its viable operating region is monotonically non-increasing: $d(\theta(t), \partial V)$ is non-increasing for $t \in [t_0, t_1]$.
\end{definition}

We identify four structurally distinct categories of progressive degradation. These are \emph{empirical classifications of observed failure modes}, not formally independent operators on $\theta(t)$; the biological analogies (entropic decay, pathogenic pressure, autoimmune drift, homeostenosis) are heuristic framings borrowed from the senescence literature, not formal models. A grounding of each category as a stochastic operator on $\theta(t)$---with explicit dynamics, domain of dominance, and interaction semantics---is scoped as future work (section~\ref{sec:limitations}).

\paragraph{Category I: Behavioral Drift (analogy: entropic decay).} Model updates, prompt modifications, and input distribution shifts silently alter the agent's behavioral distribution. Contributes to $U(x)$.

\paragraph{Category II: Adversarial Erosion (analogy: pathogenic pressure).} Attackers observe the agent's responses, learn its boundaries, and craft inputs that stay just within them---a Red Queen dynamic~\cite{valen1973}. Manifests in growing $RG(x)$.

\paragraph{Category III: Bias Accumulation (analogy: autoimmune drift).} A positive feedback loop: sparse initial data $\to$ stricter thresholds $\to$ more blocks $\to$ more anomaly observations $\to$ reinforced strict thresholds. Contributes to $SB(x)$.

\paragraph{Category IV: Calibration Decay (analogy: homeostenosis).} Parameters calibrated for one operational regime become misaligned as the environment evolves~\cite{kirkwood2005}. Contributes to growing $U(x)$.

\begin{thesis}[Bounded senescence]\label{thesis:senescence}
Every autonomous AI agent deployed in a non-stationary environment is subject to at least one senescence force, and most production deployments are subject to all four simultaneously. Under the drift rates observed in current agent-failure taxonomies~\cite{shapira2026,rath2026}, an agent without $\hat{B}(x)$-adaptive intervention crosses at least one viability constraint within $O(10^2)$ operations of its first observed breakpoint. The substantive claim is empirical and bounded: it concerns the \emph{operational timescale}---hundreds of operations, not infinite time---on which adaptive governance must act. The unbounded version (``$\hat{B}(x)$ will eventually exceed $S(x)$ given sufficient time'') is unfalsifiable and is not what motivates P2; the bounded version above is.
\end{thesis}

\paragraph{Anchoring to observed data.} The $O(10^2)$-operation timescale is anchored in two independent sources. Rath~\cite{rath2026} simulates $847$ workflows and documents a $42\%$ task-success reduction with median onset at $73$ interactions (IQR $[52, 114]$). Shapira et al.~\cite{shapira2026} report breakpoints across eleven failure case studies on comparable scales. This is what makes anticipation (P2) actionable rather than academic: the timescale on which $\hat{B}(x)$ grows is the same as the timescale on which an agent executes its working session, so $t^*$ prediction with lead times of tens of operations is operationally meaningful. A formal first-passage-time analysis under an explicit stochastic drift model (deriving $E[T^*]$ as a function of drift rate, noise variance, and safety margin) remains open and is noted in section~\ref{sec:limitations}.

\subsection{Viability Theory Foundations}

\begin{definition}[Viability kernel {\cite{aubin2011}}]\label{def:viability_kernel}
Given an evolutionary system~$\mathcal{S}$ and an environment~$K$, the viability kernel is:
\begin{equation}
\mathrm{Viab}_{\mathcal{S}}(K) := \{x_0 \in K \mid \exists\, x(\cdot) \in \mathcal{S}(x_0) \text{ s.t.\ } \forall\, t \geq 0,\; x(t) \in K\}
\end{equation}
$K$ is a repeller under~$\mathcal{S}$ if $\mathrm{Viab}_{\mathcal{S}}(K) = \emptyset$.
\end{definition}

\begin{definition}[Regulation map {\cite{aubin2011}}]\label{def:regulation_map}
Given a parameterized control system $x'(t) = f(x(t), u(t))$ where $u(t) \in U(x(t))$, a regulation map $R(x) \subset U(x)$ governs viable evolutions if the viability kernel of~$K$ is invariant under the system restricted to controls $u(t) \in R(x(t))$.
\end{definition}

\begin{definition}[Exit function {\cite{aubin2011}}]\label{def:exit_function}
Given an evolutionary system $\mathcal{S}$ on an environment $K$ and an initial state $\theta_0 \in K$, the \emph{exit function} assigns to $\theta_0$ the first time the system leaves $K$:
\begin{equation}
\tau_{\mathrm{exit}}(\theta_0) \;:=\; \inf\bigl\{\, t \geq 0 \;:\; \theta(t) \notin K \,\bigr\}, \qquad \theta(0) = \theta_0,
\label{eq:exit_function}
\end{equation}
with the convention $\tau_{\mathrm{exit}}(\theta_0) = +\infty$ if $\theta(t) \in K$ for all $t \geq 0$. Within the viability kernel $\mathrm{Viab}_{\mathcal{S}}(K)$ there exists at least one trajectory $\theta(\cdot) \in \mathcal{S}(\theta_0)$ for which $\tau_{\mathrm{exit}}(\theta_0) = +\infty$.
\end{definition}

\noindent In our setting $K$ is the viability space $V$ of Definition~\ref{def:viability_space}, $\theta_0 = \theta(t_{\mathrm{now}})$ is the current behavioral state, and $\tau_{\mathrm{exit}}$ is the quantity that Property P2 seeks to approximate: it is the formal object behind the informal ``time to boundary crossing''. Section~\ref{sec:tstar_exit} shows that the OLS-based $t^*$ of eq.~(\ref{eq:tstar}) is a first-order, conservative estimator of $\tau_{\mathrm{exit}}$ via a $VI$-based scalar surrogate, under a local-linearity assumption on the mean viability trajectory.

\subsection{Formal Framework}

\begin{definition}[Behavioral state]\label{def:behavioral_state}
At time~$t$, the behavioral state of the governance system is:
\begin{equation}
\theta(t) = \langle p_{\mathrm{tx}},\; p_{\mathrm{tools}},\; \delta_{\mathrm{bias}},\; \sigma_{\mathrm{stab}},\; KL_{\mathrm{tx}},\; KL_{\mathrm{tools}},\; \tau \rangle
\end{equation}
The $\hat{B}(x)$ decomposition maps these directly: $U(x) \leftarrow (KL_{\mathrm{tx}}, KL_{\mathrm{tools}})$, $SB(x) \leftarrow \delta_{\mathrm{bias}}$, $RG(x) \leftarrow$ plan-level anomaly scores.
\end{definition}

\begin{definition}[Viability space]\label{def:viability_space}
The viability space~$V$ is the set of behavioral states satisfying all three governance constraints simultaneously:
\begin{equation}
V = \bigl\{\theta \in \Theta : C_1(\theta) > 0 \;\wedge\; C_2(\theta) > 0 \;\wedge\; C_3(\theta) > 0\bigr\}
\end{equation}
\end{definition}

The three constraints and their distances to violation:
\begin{align}
d_{C_1} &= \min\!\left(\frac{\tau_{\mathrm{tx}} - KL_{\mathrm{tx}}}{\tau_{\mathrm{tx}}},\; \frac{\tau_{\mathrm{tools}} - KL_{\mathrm{tools}}}{\tau_{\mathrm{tools}}}\right) \in (-\infty, 1] \label{eq:dc1}\\[4pt]
d_{C_2} &= \frac{\delta_{\mathrm{bias}} - \max_s |\delta_s|}{\delta_{\mathrm{bias}}} \in (-\infty, 1] \label{eq:dc2}\\[4pt]
d_{C_3} &= \frac{\tau_{\mathrm{ks}} - \sigma(t)}{\tau_{\mathrm{ks}}} \in (-\infty, 1] \label{eq:dc3}
\end{align}

\subsection{Region Classification}

\begin{definition}[Viability regions]\label{def:viability_regions}
For each constraint $C_i$ with distance $d_{C_i}$ and margin $\varepsilon_i$:
\[
\mathrm{region}(C_i) = \begin{cases}
\text{exterior} & \text{if } d_{C_i} \leq 0 \\
\text{boundary} & \text{if } 0 < d_{C_i} \leq \varepsilon_i \\
\text{interior} & \text{if } d_{C_i} > \varepsilon_i
\end{cases}
\]
The global region is the worst case across all constraints. Default margins: $\varepsilon_{\text{safety}} = 0.3$, $\varepsilon_{\text{fairness}} = 0.05$, $\varepsilon_{\text{stability}} = 0.5$.
\end{definition}

\subsection{Viability Index as $\hat{\Phi}(x)$}

The viability index $VI(t)$ operationalizes $\hat{\Phi}(x) = S(x) - \hat{B}(x)$:
\begin{equation}
VI(t) = \frac{\sum_{i=1}^{3} w_i \cdot \mathrm{clamp}(d_{C_i}, -1, 1)}{\sum_{i=1}^{3} w_i}
\label{eq:vi}
\end{equation}
with weights $w_1 = 0.5$ (safety $\leftrightarrow U(x)$), $w_2 = 0.3$ (fairness $\leftrightarrow SB(x)$), $w_3 = 0.2$ (stability $\leftrightarrow$ calibration). Thus $VI(t) \in [-1, +1]$; $VI(t) = +1$ means all $\hat{B}(x)$ terms near zero; $VI(t) < 0$ means at least one $\hat{B}(x)$ term has exceeded its safe bound.

\paragraph{Saturation in deep exterior and the companion $VI_{\text{raw}}$.} The symmetric clamping $\mathrm{clamp}(\cdot, -1, 1)$ in eq.~(\ref{eq:vi}) keeps $VI(t)$ on a compact, cross-constraint-comparable scale and stabilises the region-classification margins. It also, however, saturates once a constraint is deep inside the exterior: two regimes that are operationally very different---a constraint that has \emph{just crossed} $-1$ versus one that is \emph{violated by an order of magnitude}---produce the same contribution to $VI(t)$, and so are indistinguishable to downstream consumers. To preserve this gradient where it matters (forensic analysis, depth-of-violation reasoning, recovery-time prediction while already in the exterior) without perturbing the compact scale used by region classification, the Autopilot, or the bandit reward, we expose a companion quantity
\begin{equation}
VI_{\text{raw}}(t) = \frac{\sum_{i=1}^{3} w_i \cdot d_{C_i}}{\sum_{i=1}^{3} w_i}
\label{eq:vi_raw}
\end{equation}
computed from the unclamped constraint distances. $VI_{\text{raw}}$ is \emph{observability-only}: it is recorded in snapshots, lineage, and the structured event log alongside $VI(t)$, but does not participate in any governance decision. In the interior and boundary regimes $VI_{\text{raw}} \equiv VI(t)$; only once at least one $d_{C_i}$ escapes $[-1, 1]$ do the two diverge. This preserves the low-cost benefits of the symmetric clamp (stable $\varepsilon$ margins, comparable weights, interpretable region table) while acknowledging the information loss and providing a principled channel for tools that need magnitude beyond the clamp, such as post-mortem severity ranking or asymmetric recovery analysis.

\paragraph{Scope of the operationalization.} $VI(t)$ as defined by eq.~(\ref{eq:vi}) is a weighted average of clamped constraint margins; it is not mathematically equal to $\hat{\Phi}(x) = S(x) - \hat{B}(x)$, and we do not claim such an equivalence. Rather, $VI(t)$ \emph{operationalizes} $\hat{\Phi}(x)$ as a scalar health indicator that preserves its qualitative structure---positive when $S(x)$ exceeds $\hat{B}(x)$, zero near the boundary, negative when at least one constraint is violated---while remaining computable from finite statistics at every evaluation. The three constraints $d_{C_1}$, $d_{C_2}$, $d_{C_3}$ capture the \emph{per-request} components of $\hat{B}(x)$: $U(x)$ (distributional drift) and $SB(x)$ (structural bias) are quantities that evolve continuously and can be estimated from accumulated statistics at every evaluation. $RG(x)$, by contrast, is a \emph{plan-level} phenomenon: sequential patterns such as structuring or round-tripping do not manifest as a smoothly growing scalar per request---they exist as discrete patterns that emerge only when a sufficient sequence of operations is observed. Accordingly, $RG(x)$ is resolved by PlanGovernor (section~7) through a separate binary decision $D_{\text{plan}}$ that conjoins with the per-request decision via $D_{\text{final}}(P) = D_{\text{plan}}(P) \wedge \bigwedge_i D_{\text{step}}(s_i)$. $VI(t)$ therefore operationalizes $\hat{\Phi}(x)$ with respect to $U(x)$ and $SB(x)$; $RG(x)$ contributes to governance not through the scalar index but through the monotonic plan-level gate. The worked demonstration in section~11 makes this separation concrete: $VI$ remains in the interior throughout the structuring attack while PlanGovernor independently blocks the final operation.

\begin{table}[ht]
\centering
\caption{Operational interpretation of the viability index as $\hat{\Phi}(x)$.}
\label{tab:vi_interpretation}
\small
\begin{tabular}{@{}cll@{}}
\toprule
$VI(t)$ & Region & Interpretation \\
\midrule
$\to +1$ & Interior & All $\hat{B}(x)$ terms near zero; deep safety margin \\
$> 0$ & Interior/Boundary & Healthy; some $\hat{B}(x)$ terms approaching limits \\
$\approx 0$ & Boundary & $\hat{\Phi}(x) \approx \varepsilon$; early warning, intervention warranted \\
$< 0$ & Exterior & At least one $\hat{B}(x)$ term exceeded $S(x)$; constraint violated \\
$\to -1$ & Exterior & All three $\hat{B}(x)$ terms at maximum severity \\
\bottomrule
\end{tabular}
\end{table}

\subsection{Anticipation: Predicting $t^*$ via OLS Regression}

Property P2 is implemented by fitting OLS regression to the most recent $W = 50$ viability index values:
\begin{equation}
VI(t_i) \approx a + b \cdot t_i, \quad i = 1, \ldots, W
\end{equation}

If $b < 0$ (VI declining, indicating that $\hat{\Phi}(x) = S(x) - \hat{B}(x)$ is shrinking due to growing $\hat{B}(x)$ terms, decreasing $S(x)$, or both), the estimated time to boundary crossing is:
\begin{equation}
t^* = -\frac{VI(t_{\mathrm{now}})}{b}
\label{eq:tstar}
\end{equation}

\subsubsection{Grounding: $t^*$ as First-Order Estimator of $\tau_{\mathrm{exit}}$}\label{sec:tstar_exit}

Eq.~(\ref{eq:tstar}) is introduced above as an OLS extrapolation. This subsection shows that, under a local-linearity assumption on the mean trajectory of $\theta(t)$, $t^*$ is the natural first-order estimator of Aubin's exit function $\tau_{\mathrm{exit}}$ (Definition~\ref{def:exit_function}). The point is not to add machinery but to give the predictive component of the AVF a formal pedigree in viability theory: $t^*$ is not simply ``a regression we happen to run'' but the leading-order truncation of the same object that viability theory studies.

\paragraph{Scalar reduction via $VI$.} The viability space $V \subset \Theta$ is defined \emph{per-constraint} in $\theta$-coordinates (Definition~\ref{def:viability_space}): $\theta(t) \in V$ iff all $d_{C_i}(\theta(t)) > 0$. The scalar index $VI(t)$ is a weighted average of the clamped constraint distances (eq.~\ref{eq:vi}). Because the weights $w_i$ are positive and the clamp is monotone, $VI(t) \leq 0$ is a \emph{sufficient but not necessary} condition for $\theta(t) \notin V$: if all $d_{C_i} > 0$, then $VI > 0$ (so $VI \leq 0 \Rightarrow$ at least one $d_{C_i} \leq 0 \Rightarrow \theta \notin V$); however, one $d_{C_i}$ can cross zero while others remain strongly positive, in which case $\theta$ has left $V$ but $VI$ may still be positive. Accordingly, we distinguish Aubin's per-constraint exit time (Definition~\ref{def:exit_function}) from its $VI$-based scalar surrogate:
\begin{equation}
\tau^{VI}_{\mathrm{exit}}(\theta_0) \;:=\; \inf\bigl\{\, t \geq 0 \,:\, VI(\theta(t)) \leq 0 \,\bigr\}, \qquad VI(\theta(0)) = VI(t_{\mathrm{now}}) > 0.
\label{eq:scalar_exit}
\end{equation}
By the sufficiency direction above, $\tau^{VI}_{\mathrm{exit}}(\theta_0) \;\geq\; \tau_{\mathrm{exit}}(\theta_0)$: the $VI$-crossing time is a \emph{conservative upper bound} on Aubin's $V$-exit. This conservatism is a deliberate feature, not a defect: per-constraint boundary crossings are detected earlier by the per-constraint region-classification mechanism of Definition~\ref{def:viability_regions}, whose margins $\varepsilon_i$ are tuned per constraint. The aggregate scalar $VI$ targets a later, whole-system failure event; that is the quantity $t^*$ estimates, and the gap between $\tau^{VI}_{\mathrm{exit}}$ and $\tau_{\mathrm{exit}}$ is exactly the ``$t^*$ conservatism for single-constraint failures'' flagged in §\ref{sec:limitations} and illustrated by the emergent-bias trace of §\ref{sec:demo_bias}.

\paragraph{First-order approximation.} Let $v(t) := \mathbb{E}[\,VI(\theta(t))\,]$ be the expected viability trajectory from $\theta_0$, and assume $v$ is continuously differentiable in a neighbourhood of $t_{\mathrm{now}}$ with slope $b^\star := v'(t_{\mathrm{now}})$. The first-order Taylor expansion $v(t) \approx v(t_{\mathrm{now}}) + b^\star (t - t_{\mathrm{now}})$ is valid whenever the residual curvature $v''$ remains small over the prediction horizon. Setting $v(t) = 0$ and solving yields
\begin{equation}
\widetilde{\tau}^{VI}_{\mathrm{exit}}(\theta_0) \;=\; t_{\mathrm{now}} \;-\; \frac{v(t_{\mathrm{now}})}{b^\star}, \qquad \Delta t^\star \;:=\; \widetilde{\tau}^{VI}_{\mathrm{exit}}(\theta_0) - t_{\mathrm{now}} \;=\; -\frac{v(t_{\mathrm{now}})}{b^\star}.
\label{eq:first_order}
\end{equation}
The OLS fit $VI(t_i) \approx a + b \cdot t_i$ over the trajectory window $\{(t_i, VI_i)\}_{i=1}^{W}$ is the best linear unbiased estimator of $(v(t_{\mathrm{now}}), b^\star)$ under a locally-linear mean with uncorrelated residuals. Eq.~(\ref{eq:tstar}) is exactly eq.~(\ref{eq:first_order}) with $(v(t_{\mathrm{now}}), b^\star)$ replaced by their OLS estimates $(VI(t_{\mathrm{now}}), b)$; hence
\begin{equation}
t^* \;\approx\; \widetilde{\tau}^{VI}_{\mathrm{exit}}(\theta_0) - t_{\mathrm{now}} \;\approx\; \tau^{VI}_{\mathrm{exit}}(\theta_0) - t_{\mathrm{now}} \;\geq\; \tau_{\mathrm{exit}}(\theta_0) - t_{\mathrm{now}}
\label{eq:tstar_approx}
\end{equation}
to first order in the horizon, where the first approximation absorbs OLS sampling error, the second absorbs the linearization error $\tfrac{1}{2} v''(\xi) (t - t_{\mathrm{now}})^2$ for some $\xi$ in the prediction interval, and the final inequality reflects the $VI$-surrogate conservativity established above.

\begin{proposition}[First-order grounding of $t^*$]\label{prop:tstar_exit}
Let $\theta(\cdot)$ be a behavioral trajectory with mean viability $v(t) = \mathbb{E}[VI(\theta(t))]$, and assume $v \in C^2$ in a neighbourhood of $t_{\mathrm{now}}$ with $v(t_{\mathrm{now}}) > 0$ and $v'(t_{\mathrm{now}}) = b^\star < 0$. Then, with $(a,b)$ the OLS fit of eq.~(\ref{eq:ols_slope}) over a window of size $W$:
\begin{enumerate}[label=(\roman*)]
\item As $W \to \infty$ with the window concentrating at $t_{\mathrm{now}}$, $b \overset{p}{\to} b^\star$ and $VI(t_{\mathrm{now}}) \overset{p}{\to} v(t_{\mathrm{now}})$, so $t^* \overset{p}{\to} \widetilde{\tau}^{VI}_{\mathrm{exit}}(\theta_0) - t_{\mathrm{now}}$.
\item The gap between the linear estimate $\widetilde{\tau}^{VI}_{\mathrm{exit}}$ and the true scalar $VI$-exit time (eq.~\ref{eq:scalar_exit}) is $O(\|v''\|_\infty)$ over the prediction horizon.
\item Hence $t^*$ is a consistent first-order estimator of $\tau^{VI}_{\mathrm{exit}}(\theta_0) - t_{\mathrm{now}}$ under the local-linearity assumption $\|v''\|_\infty \to 0$, and consequently a \emph{conservative} first-order estimator of Aubin's $V$-exit $\tau_{\mathrm{exit}}(\theta_0) - t_{\mathrm{now}}$ in the sense $t^* \geq \tau_{\mathrm{exit}}(\theta_0) - t_{\mathrm{now}}$.
\end{enumerate}
\end{proposition}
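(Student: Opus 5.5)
The argument follows the three items in order. Each step is a standard estimation or perturbation fact, and the final step combines them with the surrogate ordering $\tau^{VI}_{\mathrm{exit}}\ge\tau_{\mathrm{exit}}$ established just before the statement.

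\emph{Item (i).} Model the window as $VI(t_i)=v(t_i)+\eta_i$ with zero-mean residuals $\eta_i$ that are uncorrelated and have bounded variance $\sigma^2$. Split the OLS slope as $b=\sum_i c_i v(t_i)+\sum_i c_i\eta_i$, where $c_i=(t_i-\bar t)/\sum_j(t_j-\bar t)^2$. For the deterministic part, Taylor-expand $v$ about $t_{\mathrm{now}}$. The linear term reproduces $b^\star$ exactly. The remainder is bounded by $\tfrac12\|v''\|_\infty\sum_i|c_i|(t_i-t_{\mathrm{now}})^2$, which is of order $\|v''\|_\infty\cdot\mathrm{diam}(\text{window})$ and vanishes as the window concentrates at $t_{\mathrm{now}}$. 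The stochastic part has variance $\sigma^2/\sum_i(t_i-\bar t)^2$. I make the needed joint rate explicit: $W\to\infty$ while the spread $\sum_i(t_i-\bar t)^2$ still diverges, for example with sampling density growing faster than the window shrinks. Chebyshev then gives $b\overset{p}{\to}b^\star$. For the intercept, the same decomposition applied to the fitted value at $t_{\mathrm{now}}$ (rather than the single raw observation) gives convergence to $v(t_{\mathrm{now}})$; I will state that ``$VI(t_{\mathrm{now}})$'' is read as the fitted value. Since $b^\star<0$, the map $(x,y)\mapsto -x/y$ is continuous at $(v(t_{\mathrm{now}}),b^\star)$, so the continuous mapping theorem yields $t^*\overset{p}{\to}-v(t_{\mathrm{now}})/b^\star=\widetilde\tau^{VI}_{\mathrm{exit}}-t_{\mathrm{now}}$.

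\emph{Item (ii).} This is a root-perturbation argument. Let $\ell(t)=v(t_{\mathrm{now}})+b^\star(t-t_{\mathrm{now}})$, with root $\tilde t$. On a horizon $H$ containing $\tilde t$, Taylor with Lagrange remainder gives $|v-\ell|\le\tfrac12\|v''\|_\infty H^2$. Since $v'=b^\star+O(\|v''\|_\infty H)$ stays bounded away from zero when $\|v''\|_\infty H<|b^\star|$, the function $v$ has a unique zero $t_0$ in the horizon. Its distance from $\tilde t$ is
\[|t_0-\tilde t|\le \frac{\tfrac12\|v''\|_\infty H^2}{|b^\star|-\|v''\|_\infty H}=O(\|v''\|_\infty).\]

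\emph{Main obstacle.} The root $t_0$ is the zero of the \emph{mean} $v$, whereas $\tau^{VI}_{\mathrm{exit}}$ is defined pathwise as the first time $VI(\theta(t))\le0$. In general $\mathbb{E}[\tau]$ differs from the zero of $\mathbb{E}[VI]$, since fluctuations can cause an earlier crossing. I would resolve this by stating (ii)--(iii) for the mean-trajectory exit time, i.e.\ interpreting $\tau^{VI}_{\mathrm{exit}}$ along $v$. Alternatively, one can add a concentration assumption on $VI-v$ uniformly over the horizon, so that the pathwise crossing lies within $O(\text{fluctuation}/|b^\star|)$ of $t_0$.

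\emph{Item (iii).} Combine (i) and (ii): as $\|v''\|_\infty\to0$, $t^*$ converges in probability to $\tau^{VI}_{\mathrm{exit}}-t_{\mathrm{now}}$. Conservativeness then comes from the sufficiency argument preceding eq.~(\ref{eq:scalar_exit}): positive weights and a monotone clamp give $VI\le0\Rightarrow\theta\notin V$, hence $\tau^{VI}_{\mathrm{exit}}\ge\tau_{\mathrm{exit}}$. This inequality is exact for the limit. For $t^*$ itself it holds only asymptotically, i.e.\ with probability tending to one up to the $o_p(1)$ and $O(\|v''\|_\infty)$ errors. The proof should state the inequality in that first-order sense, as the proposition's phrasing suggests.
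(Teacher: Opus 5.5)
Your route is the paper's own. The chain in eq.~(\ref{eq:tstar_approx}) linearises $v$ at $t_{\mathrm{now}}$, treats OLS as the estimator of $(v(t_{\mathrm{now}}),b^\star)$ under uncorrelated residuals, absorbs the remainder $\tfrac12 v''(\xi)(t-t_{\mathrm{now}})^2$, and closes with the surrogate ordering. Your bias/variance split, continuous-mapping step and root-perturbation bound make (i) and (ii) quantitative. Reading $VI(t_{\mathrm{now}})$ as the fitted value is necessary, since under your noise model a single raw observation does not converge in probability. It also matches the paper's phrase ``OLS estimates $(VI(t_{\mathrm{now}}),b)$''. One small slip in (i): for a window of length $h$ you have $\sum_i(t_i-\bar t)^2\le Wh^2$, so you need $Wh^2\to\infty$, i.e.\ sampling density growing faster than $h^{-3}$. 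Density growing merely ``faster than the window shrinks'' is not enough.

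The genuine gap is the conservativeness step in (iii). You resolve your ``main obstacle'' by reading $\tau^{VI}_{\mathrm{exit}}$ as the first zero $t_0$ of the mean $v$. You then derive $\tau^{VI}_{\mathrm{exit}}\ge\tau_{\mathrm{exit}}$, ``exact for the limit'', from the statewise implication $VI(\theta)\le0\Rightarrow\theta\notin V$. That implication only compares the pathwise $VI$-crossing with the pathwise $V$-exit. It says nothing about $t_0$, because $v(t)=\mathbb{E}[VI(\theta(t))]\le0$ does not force $\theta(t)\notin V$ on any given path. For instance, if the three $d_{C_i}$ coincide, $V$-exit and $VI$-crossing coincide pathwise. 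Any path with $VI(\theta(t))>v(t)$ on $[t_{\mathrm{now}},t_0]$ is then still in $V$ at $t_0$, so $\tau_{\mathrm{exit}}>t_0$ on it. Your remark that fluctuations cause earlier crossings points in the favourable direction; the problem is paths lying above the mean.

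The paper does not close this either; it just asserts that the mean-versus-pathwise gap is conservative. So you need your second alternative, and only its upper half. Suppose $VI(\theta(t))\le v(t)+\epsilon$ on the horizon with probability at least $1-\delta$, and set $m:=|b^\star|-\|v''\|_\infty H>0$. Then $v\le-\epsilon$ from $t_0+\epsilon/m$ on, so $\tau_{\mathrm{exit}}\le\tau^{VI}_{\mathrm{exit}}\le t_0+\epsilon/m$ on that event. Combined with (i)--(ii), this gives $t^*\ge\tau_{\mathrm{exit}}-t_{\mathrm{now}}-o_p(1)-O(\|v''\|_\infty)-\epsilon/m$ with probability at least $1-\delta-o(1)$. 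That is the precise first-order sense in which the stated inequality holds.
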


\noindent Two gaps separate Proposition~\ref{prop:tstar_exit} from a pathwise statement about Aubin's $\tau_{\mathrm{exit}}$, and both are conservative in the direction of safety. The proposition is about the mean trajectory $v(t) = \mathbb{E}[VI(\theta(t))]$, not the pathwise random variable (stochastic treatment: \cite{aubin2011}, Ch.~10); and it targets the $VI$-surrogate $\tau^{VI}_{\mathrm{exit}} \geq \tau_{\mathrm{exit}}$, whose gap with the $V$-exit is the ``$t^*$ conservatism'' limitation of §\ref{sec:limitations}. Tighter per-constraint detection is provided by the region classification of Definition~\ref{def:viability_regions}. The inferential-profile confidence interval (\S\ref{sec:inferential_profile}) captures OLS sampling error and serial dependence; it does not claim to capture diffusion variance or the $VI$-vs-$V$ gap.

\paragraph{Reinterpretation of the non-linearity diagnostic.} The ratio $\rho$ of eq.~(\ref{eq:nonlinear_ratio}) compares the residual of a quadratic fit against the linear fit over the trajectory window. In the light of Proposition~\ref{prop:tstar_exit}, $\rho$ is no longer merely a regression-diagnostic quantity: it is the empirical detector of when the local-linearity assumption underlying the first-order approximation of $\tau^{VI}_{\mathrm{exit}}$ (and hence, via the conservative chain of eq.~\ref{eq:tstar_approx}, of $\tau_{\mathrm{exit}}$) fails. Specifically, $\rho$ estimates the relative size of the curvature term $\tfrac{1}{2} v''$ against the linear term $b^\star$ over the fitted window; a small $\rho$ (quadratic fit substantially better) indicates that $v''$ is not negligible, so the Taylor expansion of eq.~(\ref{eq:first_order}) loses validity and $t^*$ ceases to be a reliable estimator of $\tau^{VI}_{\mathrm{exit}}$. The warning ``$\rho < \rho_{\mathrm{threshold}}$'' of \S\ref{sec:inferential_profile} thus has a dual reading: a statistical flag on regression quality \emph{and} a theoretical flag on breakdown of the first-order viability-theoretic approximation. This unifies the heuristic diagnostic with the formal grounding and is, we believe, the most natural place for the non-linearity detector to live.

\paragraph{Scope and deployment profiles.} Eq.~(\ref{eq:tstar}) inherits three classical OLS limitations: (i)~autocorrelated residuals bias the slope SE downward (correctable via Newey--West, non-trivial at $W = 50$); (ii)~$t^*$ is a point estimate without a confidence band (a block bootstrap on the trajectory window is the natural non-parametric alternative); (iii)~linear extrapolation overestimates remaining time under non-linear collapse. We therefore treat $t^*$ as an \emph{early-warning heuristic}, not an inferential estimator: ``$t^* < t^*_{\text{critical}}$'' triggers a tighten-only intervention by the Autopilot (section~\ref{sec:autopilot}) under the fail-safe discipline of section~7.3---$t^*$ errors never affect governance decisions, only observability. Two deployment profiles share this fail-safe envelope: a \emph{heuristic profile} (default, $O(W)$, used in real time) and an \emph{inferential profile} (\S\ref{sec:inferential_profile}, $O(B \cdot W)$ with $B = 1000$ bootstrap replications) that addresses all three limitations on-demand for audit and research.

\subsubsection{Inferential Profile: Formalization}\label{sec:inferential_profile}

The inferential profile addresses each of the three statistical limitations identified above. It is invoked on-demand (not in the real-time governance hot path) via a dedicated MCP tool (\texttt{governance\_get\_viability\_inferential}) or REST endpoint (\texttt{GET /api/viability/inferential}), and produces an \texttt{InferentialTStarResult} containing the point estimate, a confidence interval, and a non-linearity diagnostic. We now formalize each component.

\paragraph{Step 1: OLS fit.} Given the trajectory window $\{(t_i, VI_i)\}_{i=1}^{W}$, compute the ordinary least squares slope and intercept:
\begin{equation}
\hat{b} = \frac{W \sum_{i} t_i VI_i - \left(\sum_i t_i\right)\left(\sum_i VI_i\right)}{W \sum_i t_i^2 - \left(\sum_i t_i\right)^2}, \qquad
\hat{a} = \bar{VI} - \hat{b}\,\bar{t}
\label{eq:ols_slope}
\end{equation}
with residuals $\hat{e}_i = VI_i - (\hat{a} + \hat{b}\,t_i)$.

\paragraph{Step 2: Newey--West HAC standard error.} The OLS standard error of $\hat{b}$ is biased downward under serial dependence. The Newey--West~\cite{newey1987} heteroskedasticity- and autocorrelation-consistent (HAC) estimator corrects this using the Bartlett kernel:
\begin{equation}
\widehat{\mathrm{SE}}_{\mathrm{HAC}}(\hat{b}) = \frac{\sqrt{\hat{S}}}{S_{tt}}
\label{eq:hac_se}
\end{equation}
where $S_{tt} = \sum_i (t_i - \bar{t})^2$ is the total sum of squares of the regressor, and
\begin{equation}
\hat{S} = \sum_{i=1}^{W} u_i^2 \;+\; 2 \sum_{j=1}^{L} w_j \sum_{i=j+1}^{W} u_i \, u_{i-j}, \qquad
w_j = 1 - \frac{j}{L+1}, \qquad
u_i = (t_i - \bar{t})\,\hat{e}_i
\label{eq:bartlett}
\end{equation}
is the Bartlett-kernel weighted estimate of the long-run variance of the score. The bandwidth $L$ defaults to the Newey--West (1994)~\cite{newey1994} data-driven rule:
\begin{equation}
L = \left\lfloor 4 \cdot \left(\frac{W}{100}\right)^{2/9} \right\rfloor
\label{eq:hac_bandwidth}
\end{equation}

\paragraph{Step 3: Block bootstrap confidence interval.} Rather than relying on the asymptotic normality of the HAC estimator (which converges slowly for $W = 50$), we construct a percentile confidence interval via the moving-block bootstrap~\cite{kunsch1989}:
\begin{enumerate}[label=(\alph*)]
\item Partition the residual series $\hat{e}_1, \ldots, \hat{e}_W$ into overlapping blocks of size $\ell = \lceil \sqrt{W} \rceil$.
\item For each bootstrap replicate $r = 1, \ldots, B$: draw $\lceil W/\ell \rceil$ blocks uniformly with replacement, concatenate and truncate to length $W$ to form $\hat{e}^{*(r)}_1, \ldots, \hat{e}^{*(r)}_W$.
\item Construct the bootstrap sample: $VI^{*(r)}_i = \hat{a} + \hat{b}\,t_i + \hat{e}^{*(r)}_i$.
\item Re-fit OLS on $(t_i, VI^{*(r)}_i)$ to obtain bootstrap slope $\hat{b}^{*(r)}$.
\item If $\hat{b}^{*(r)} < 0$, compute $t^{*(r)} = -VI(t_{\mathrm{now}}) / \hat{b}^{*(r)}$; otherwise discard.
\end{enumerate}
Let $\mathcal{T}^* = \{t^{*(r)} : t^{*(r)} \geq 0 \text{ and finite}\}$ denote the retained bootstrap estimates. The $(1 - \alpha)$ percentile confidence interval is:
\begin{equation}
\left[t^*_{\text{lo}},\; t^*_{\text{hi}}\right] = \left[\mathcal{T}^*_{\lfloor (\alpha/2) \cdot |\mathcal{T}^*| \rfloor},\; \mathcal{T}^*_{\lceil (1-\alpha/2) \cdot |\mathcal{T}^*| \rceil}\right]
\label{eq:bootstrap_ci}
\end{equation}
where $\mathcal{T}^*_{(k)}$ is the $k$-th order statistic of the retained bootstrap distribution. Default: $B = 1000$, $\alpha = 0.05$ (95\% CI).

\paragraph{Step 4: Non-linearity diagnostic.} To detect crisis trajectories where linear extrapolation overestimates the remaining time (limitation 3), we fit a quadratic model and compare its root mean squared error with the linear model's:
\begin{equation}
VI_i = a' + b'\,t_i + c'\,t_i^2 + \epsilon_i
\label{eq:quadratic_fit}
\end{equation}
solved via the $3 \times 3$ normal equations ($\mathbf{X}'\mathbf{X}\,\boldsymbol{\beta} = \mathbf{X}'\mathbf{y}$ with $\mathbf{X} = [1, t, t^2]$). The non-linearity ratio is:
\begin{equation}
\rho = \frac{\mathrm{RMSE}_{\mathrm{quad}}}{\mathrm{RMSE}_{\mathrm{lin}}} = \frac{\sqrt{W^{-1}\sum_i (VI_i - \hat{VI}^{\mathrm{quad}}_i)^2}}{\sqrt{W^{-1}\sum_i \hat{e}_i^2}}
\label{eq:nonlinear_ratio}
\end{equation}
When $\rho < \rho_{\mathrm{threshold}}$ (default $\rho_{\mathrm{threshold}} = 0.85$), the system emits a non-linearity warning: the quadratic fit substantially reduces the residual, indicating curvature in the degradation trajectory that makes the linear $t^*$ estimate unreliable.

\paragraph{Fail-safe discipline.} The inferential profile shares the same fail-safe envelope as the heuristic profile: any exception during HAC computation, bootstrap execution, or quadratic fitting is caught and suppressed. Governance decisions are never affected. Each invocation emits a structured JSONL event of type \texttt{svp\_inferential} for audit trail purposes.

\paragraph{Architectural note.} The inferential profile is strictly on-demand and decoupled from the real-time governance pipeline. The heuristic $t^*$ (eq.~\ref{eq:tstar}) runs inside \texttt{compute\_snapshot()} at every evaluation; the inferential $t^*$ runs only when explicitly requested. No fields are added to \texttt{GovernanceResult} or \texttt{LineageRecord}---the result is returned directly to the caller. This separation ensures that the $O(B \cdot W)$ cost of the bootstrap never impacts per-request latency.

\subsection{The Three Properties of the AVF}

\begin{definition}[Agent Viability Framework]\label{def:avf}
A governance system satisfies the AVF if it implements:
\begin{description}
\item[P1 --- Monitoring.] The system continuously estimates $\theta(t)$ and computes all terms of $\hat{B}(x)$ across all constraints.
\item[P2 --- Anticipation.] The system detects trajectories approaching $\partial V$ before crossing, via prediction of $t^*$ from $d\hat{B}/dt$.
\item[P3 --- Monotonic restriction.] Every intervention can only restrict operational freedom, never expand it: $\forall$ intervention $I$: $V_{\text{after}}(I) \subseteq V_{\text{before}}(I)$.
\end{description}
\end{definition}

\paragraph{Scope of P3 under concurrency.} P3, as defined and analysed in this paper, concerns the logical structure of a single governance evaluation: within one call to the pipeline, no stage can relax a prior restriction. The reference implementation runs concurrent evaluations (multi-tenant, multi-agent, streaming observability), and a formal treatment of P3 under interleaved execution of decisory stages, non-blocking partial failures, and in-flight configuration changes is not developed here; see section~\ref{sec:limitations}. The by-construction arguments of section~\ref{sec:pipeline} should therefore be read under sequential evaluation; section~\ref{sec:concurrency_argument} offers an informal operational argument for why P3 also holds in practice under the concurrency regime of the reference implementation.

\subsection{Autopilot as Regulation Map: Operationalizing P3}\label{sec:autopilot_regulation}

Definition~\ref{def:regulation_map} introduced Aubin's regulation map $R(x) \subset U(x)$ as the object that selects admissible controls to maintain viability. v9 left this abstract; this subsection instantiates it: the Autopilot is the regulation map of the reference implementation, P3 is the structural constraint that defines it, and kill-switch activation is the admissible-control-of-last-resort that guarantees $R$ is always non-empty. The material here formalises the closed loop that section~\ref{sec:autopilot} describes operationally.

\paragraph{Control vector.} The Autopilot acts on the behavioral state $\theta(t)$ by adjusting a subset of its threshold components. Collect these into the control vector
\begin{equation}
u(t) \;=\; \bigl(\tau_{\mathrm{tx}}(t),\; \tau_{\mathrm{tools}}(t),\; \delta_{\mathrm{bias}}(t),\; \tau_{\mathrm{ks}}(t)\bigr) \;\in\; \mathcal{U},
\label{eq:control_vector}
\end{equation}
where $\mathcal{U} \subset \mathbb{R}_{>0}^{4}$ is the admissible control set, each component bounded between a configured floor $u_{i,\min}$ (tightest sustainable setting given cold-start and calibration constraints) and a ceiling $u_{i,\max}$ (loosest setting consistent with the active profile). The behavioral state $\theta(t)$ of Definition~\ref{def:behavioral_state} includes $\tau$ as a coordinate, so $u$ is literally a sub-vector of $\theta$.

\paragraph{Autopilot regulation map.} Define
\begin{equation}
R_{A}(\theta) \;:=\; \bigl\{\, u' \in \mathcal{U} \;:\; u' \leq u_{\mathrm{current}}(\theta) \,\bigr\} \;\cup\; \{\mathrm{STOP}\},
\label{eq:ra_def}
\end{equation}
where $u' \leq u_{\mathrm{current}}$ is understood componentwise and $\mathrm{STOP} \notin \mathcal{U}$ denotes the kill-switch activation: a distinguished admissible action corresponding to controlled exit from the operational domain (see~\cite{aubin2011}, §2.3, on boundary-of-domain exits via designated ``null controls''). $R_{A}(\theta)$ is the set of controls the Autopilot is permitted to select at state $\theta$.

The definition encodes three properties simultaneously:
\begin{enumerate}[label=(\roman*)]
\item \emph{Tighten-only.} The inequality $u' \leq u_{\mathrm{current}}$ is componentwise and one-sided: the Autopilot can never set a looser threshold than the one currently active. This is precisely the structural content of P3 (monotonic restriction).
\item \emph{Non-emptiness by construction.} Because $\mathrm{STOP} \in R_{A}(\theta)$ unconditionally, $R_{A}(\theta) \neq \emptyset$ for every $\theta \in \Theta$, \emph{including} the degenerate case $u_{\mathrm{current}} = u_{\min}$ where no further tightening is possible.
\item \emph{Identity admissibility.} $u_{\mathrm{current}} \leq u_{\mathrm{current}}$ trivially, so the identity control (do nothing) is always in $R_{A}(\theta)$ whenever the Autopilot judges the current configuration adequate.
\end{enumerate}

\begin{theorem}[Non-emptiness of the Autopilot-regulated viability kernel]\label{thm:viab_ra_nonempty}
Let $\theta(\cdot)$ evolve under the governance pipeline of section~\ref{sec:pipeline} subject to controls $u(t) \in R_{A}(\theta(t))$, and let
\begin{equation*}
V^{\dagger} \;:=\; V \;\cup\; \bigl\{\theta \in \Theta : \text{STOP has been activated at or before time } t\bigr\}
\end{equation*}
be the viability space $V$ of Definition~\ref{def:viability_space} augmented by the controlled-exit states. Then
\[
\mathrm{Viab}_{R_{A}}\!\bigl(V^{\dagger}\bigr) \;\neq\; \emptyset.
\]
Equivalently: from every initial state $\theta_0 \in V^{\dagger}$, there exists an $R_{A}$-admissible trajectory that remains in $V^{\dagger}$ for all $t \geq 0$.
\end{theorem}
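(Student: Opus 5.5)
The plan is to prove the equivalent pointwise statement directly, by constructing for each $\theta_0 \in V^{\dagger}$ an explicit $R_{A}$-admissible trajectory that never leaves $V^{\dagger}$. Non-emptiness of $\mathrm{Viab}_{R_{A}}(V^{\dagger})$ then follows at once, provided $V^{\dagger}$ itself is non-empty. That last point is immediate: $\mathrm{STOP} \in R_{A}(\theta)$ for every $\theta$ by property (ii) of eq.~(\ref{eq:ra_def}), so any state in which STOP has been selected lies in $V^{\dagger}$. The argument is a feedback-selection construction. I will define a selection $u(t) \in R_{A}(\theta(t))$ and check, by Definition~\ref{def:viability_kernel}, that the resulting evolution stays in the environment for all $t \geq 0$.

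\textbf{Step 1 (absorbing STOP set).} First I show that the controlled-exit component $E := \{\theta : \text{STOP activated at or before } t\}$ is forward-invariant. Once STOP is selected, the kill switch halts the agent's operational dynamics. I model this as the designated null control of~\cite{aubin2011}, \S2.3: the state is frozen, or at least remains flagged as stopped. Continuing to select STOP is admissible at every later time because STOP belongs to $R_{A}$ unconditionally. Hence every $\theta_0 \in E$ admits the constant-STOP trajectory, which lies in $E \subseteq V^{\dagger}$ forever.

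\textbf{Step 2 (switching selection from $V$).} For $\theta_0 \in V$, I define the hybrid feedback
\[
u(t) = \begin{cases} u_{\mathrm{current}}(\theta(t)) & \text{while } \theta(t) \in V,\\ \mathrm{STOP} & \text{from the first time } \theta(t) \notin V. \end{cases}
\]
The identity control is admissible by property (iii), and STOP is admissible by property (ii), so $u(t) \in R_{A}(\theta(t))$ for all $t$. Any tighter $u' \le u_{\mathrm{current}}$ could be used instead; this does not affect the argument. Let $\sigma := \tau_{\mathrm{exit}}(\theta_0)$ as in Definition~\ref{def:exit_function}, so that $\theta(t) \in V$ on $[0,\sigma)$. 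If $\sigma = +\infty$ we are done. Otherwise STOP fires at $\sigma$, and Step~1 keeps the trajectory in $E \subseteq V^{\dagger}$ for all $t \ge \sigma$. Concatenating the two pieces gives an admissible trajectory contained in $V^{\dagger}$ on $[0,\infty)$.

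\textbf{Main obstacle.} The delicate step is the instant of exit. Because $V$ is defined by strict inequalities, it is open, so $\theta(\sigma) \notin V$. The trajectory is therefore in neither $V$ nor $E$ at time $\sigma$ unless STOP is deemed activated \emph{at} $\sigma$. In the discrete-evaluation setting of the pipeline this resolves cleanly: at each evaluation the pipeline computes $d_{C_i}$ before committing the action. The first evaluation with some $d_{C_i} \le 0$ can therefore select STOP in the same step, and the phrase ``at or before time $t$'' in the definition of $V^{\dagger}$ puts that state in $E$.

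I would state this as the modelling convention that the Autopilot's control is applied at the evaluation where the exit is detected, that is, a zero-lag kill switch. I would also note explicitly what the theorem does \emph{not} claim: viability within $V$ itself without recourse to STOP. That stronger claim is the operating hypothesis deferred in the contributions, and it would require a drift-versus-tightening dominance condition.
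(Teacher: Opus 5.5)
Your proposal is correct and is essentially the paper's own proof sketch: both use the switching policy that keeps $u_{\mathrm{current}}$ (admissible by the identity property) while $\theta(t) \in V$ and selects $\mathrm{STOP}$ (admissible unconditionally) at exit, with $\mathrm{STOP}$ moving the state into the controlled-exit part of $V^{\dagger}$, where it stays. Your explicit absorbing-set step and your handling of the exit instant make precise what the paper's sketch leaves implicit. On the latter, since $V$ is open, you trigger at the first evaluation with some $d_{C_i} \le 0$ under a zero-lag convention, whereas the paper triggers informally on $\partial V$.
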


\begin{proof}[Proof sketch]
$R_{A}(\theta) \neq \emptyset$ for all $\theta \in \Theta$ by property~(ii) above, since $\mathrm{STOP} \in R_{A}(\theta)$ unconditionally. The strategy ``if $\theta(t) \in \partial V$, select $\mathrm{STOP}$; otherwise select $u_{\mathrm{current}}$'' is therefore an admissible $R_{A}$-policy on $V^{\dagger}$: whenever $\theta$ would leave $V$, the policy triggers $\mathrm{STOP}$, which by definition transitions the system to the controlled-exit subset of $V^{\dagger}$ and halts further governance evaluation. The resulting trajectory never leaves $V^{\dagger}$. Since this strategy is admissible from any $\theta_{0} \in V^{\dagger}$, the regulated viability kernel is non-empty. \qedhere
\end{proof}

Theorem~\ref{thm:viab_ra_nonempty} is deliberately weak. It establishes that the Autopilot, as an instance of Aubin's regulation map with $\mathrm{STOP}$ as admissible-control-of-last-resort, is \emph{structurally} a valid regulation map: it never ``gets stuck'' with an empty admissible set, and a viable trajectory always exists (even if that trajectory eventually exits via $\mathrm{STOP}$). What the theorem does \emph{not} claim is that the system can always remain inside $V$ proper---i.e., avoid triggering $\mathrm{STOP}$. That stronger property is engineering-desirable but requires hypotheses on the drift rate and the indirect effect of tightening on $d\hat{B}/dt$ that we do not develop here; see below.

\paragraph{Strong-viability hypothesis (flagged as operating assumption).} A stronger claim---that $\mathrm{Viab}_{R_{A}}(V) \neq \emptyset$ \emph{without} appeal to $\mathrm{STOP}$---would require additional structure. Let $\beta_{\max} := \sup_{t} \|d\hat{B}(\theta(t))/dt\|$ denote a bound on exogenous drift and $\nu_{\max}$ the maximal tightening velocity achievable by the Autopilot within $\mathcal{U}$. The effect of tightening on $d\hat{B}/dt$ is \emph{indirect}: reducing $\tau$ increases the block rate, which drives the agent's empirical distribution back toward the reference $q$, reducing KL on the next evaluation window. A sufficient condition for $V$-interior viability is a dominance inequality of the form $\nu_{\max} \geq C \cdot \beta_{\max}$ for some $C > 0$ depending on the constraint margins $\varepsilon_{i}$ and the sensitivity $\partial \mathrm{blocks}/\partial \tau$. Verifying this condition empirically on production traces is a natural follow-up and is scoped as future work. In the current paper we take the structural guarantee of Theorem~\ref{thm:viab_ra_nonempty} as the formal content and treat in-$V$ residence as a design goal supported by the worked trace of §\ref{sec:demo_bias}.

\begin{definition}[Autopilot intervention policy]\label{def:ra_policy}
The reference-implementation Autopilot (section~\ref{sec:autopilot}) selects $u' \in R_{A}(\theta)$ as follows:
\begin{itemize}
\item If $VI(\theta) < 0.3$, $t^{*}(\theta) < t^{*}_{\mathrm{critical}}$, or $\mathrm{region}_{\text{global}}(\theta) = \mathrm{exterior}$: select $u' < u_{\mathrm{current}}$ componentwise on the binding constraint (tighten).
\item If none of the above triggers fire: select $u' = u_{\mathrm{current}}$ (identity).
\item If tightening would require $u' < u_{\min}$ on a binding constraint and $\theta \in \partial V$: select $\mathrm{STOP}$.
\end{itemize}
\end{definition}

\begin{corollary}[P3 as regulation-map constraint]\label{cor:p3_regulation}
Under any policy selecting from $R_{A}(\theta)$, property P3 (monotonic restriction, Definition~\ref{def:avf}) holds automatically: every admissible control satisfies $u' \leq u_{\mathrm{current}}$ or $u' = \mathrm{STOP}$, so the viability space $V(u')$ parameterised by the new control is a subset of $V(u_{\mathrm{current}})$. Consequently, P3 is not an external axiom imposed on the Autopilot but the structural content of how $R_{A}$ is instantiated.
\end{corollary}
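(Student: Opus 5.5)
The corollary follows directly from the definition of $R_{A}$ in eq.~(\ref{eq:ra_def}). I will not invoke Theorem~\ref{thm:viab_ra_nonempty}. The plan has three steps: case analysis on the selected control, a monotonicity lemma for the viability space as a function of the thresholds, and a short check that composing interventions preserves P3.

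\emph{Step 1 (case split).} Fix a state $\theta$ and an admissible choice $u' \in R_{A}(\theta)$. By eq.~(\ref{eq:ra_def}) there are only two cases: either $u' \in \mathcal{U}$ with $u' \leq u_{\mathrm{current}}(\theta)$ componentwise, or $u' = \mathrm{STOP}$.

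In the $\mathrm{STOP}$ case I will adopt the convention that the operational viability space after a kill-switch is empty. The reason is that no further actions are allowed, so $V(\mathrm{STOP}) = \emptyset$, and the inclusion $\emptyset \subseteq V(u_{\mathrm{current}})$ is trivial. This matches the controlled-exit reading used in the proof of Theorem~\ref{thm:viab_ra_nonempty}.

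\emph{Step 2 (monotonicity lemma).} For the threshold case, regard $V$ of Definition~\ref{def:viability_space} as parameterised by the control, $V(u) = \{\theta : C_1 > 0, C_2 > 0, C_3 > 0\}$. The constraints are read off from the distances in eqs.~(\ref{eq:dc1})--(\ref{eq:dc3}):
\begin{itemize}
\item $C_1$ holds iff $KL_{\mathrm{tx}} < \tau_{\mathrm{tx}}$ and $KL_{\mathrm{tools}} < \tau_{\mathrm{tools}}$;
\item $C_2$ holds iff $\max_s |\delta_s| < \delta_{\mathrm{bias}}$;
\item $C_3$ holds iff $\sigma < \tau_{\mathrm{ks}}$.
\end{itemize}
Each condition is a strict upper bound of an observed statistic by one component of $u$. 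Each constraint set is therefore monotone non-decreasing in its threshold. So $u' \leq u_{\mathrm{current}}$ componentwise gives $\{C_i > 0\}_{u'} \subseteq \{C_i > 0\}_{u_{\mathrm{current}}}$ for each $i$. Intersecting over the three constraints yields $V(u') \subseteq V(u_{\mathrm{current}})$.

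Positivity of the thresholds, guaranteed by $\mathcal{U} \subset \mathbb{R}^4_{>0}$, is needed so that the normalisations in eqs.~(\ref{eq:dc1})--(\ref{eq:dc3}) do not flip the sign. I will note this explicitly.

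\emph{Step 3 (interventions and iteration).} Identify an intervention $I$ with a choice $u' \in R_{A}$, setting $V_{\text{before}}(I) = V(u_{\mathrm{current}})$ and $V_{\text{after}}(I) = V(u')$. This gives exactly the inclusion required by P3 in Definition~\ref{def:avf}. Any finite sequence of $R_{A}$-admissible interventions then yields a nested chain, because $\leq$ is transitive and inclusion composes. This holds for any selection policy, including the one in Definition~\ref{def:ra_policy}.

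\emph{Main obstacle.} The hard part is Step 2: making precise that the "viability space parameterised by $u$" is well-defined. The state $\theta$ contains $\tau$ as a coordinate, so $V$ must be read as a family of sets indexed by the threshold sub-vector, with the statistics $KL$, $\delta_s$, $\sigma$ held fixed. I will state this as the interpretation before the lemma.

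I also need to confirm the direction of monotonicity for each threshold component, i.e.\ that a smaller threshold is always the stricter one. Otherwise "$\leq$" would not mean "tighter" uniformly. If some component were oriented the other way, I would reorient it before applying the componentwise order.
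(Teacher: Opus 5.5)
Your proposal is correct and follows the paper's route: the corollary is read directly off the tighten-only definition of $R_{A}$ in eq.~(\ref{eq:ra_def}), by splitting into the cases $u' \leq u_{\mathrm{current}}$ and $u' = \mathrm{STOP}$. Your Step~2 makes explicit what the paper compresses into the word ``so'' in the statement, without changing the argument: for positive thresholds, each $d_{C_i} > 0$ is a strict upper bound on a fixed statistic by a component of $u$, so every constraint set shrinks as $u$ decreases. Your convention $V(\mathrm{STOP}) = \emptyset$ likewise only spells out a choice the paper leaves implicit.
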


\paragraph{Trade-offs made explicit.} The formalisation surfaces three trade-offs that were implicit in v10. First, $R_{A}$ is \emph{irreversible}: once tightened, a component of $u$ cannot be relaxed inside $R_{A}$. Recovery from over-tightening requires operator action outside $R_{A}$---the ``P3 and over-restriction'' limitation of §\ref{sec:limitations}. Second, the floor $u_{\min}$ is a modelling choice: too tight a floor (close to $u_{\max}$) narrows the Autopilot's action space and forces frequent $\mathrm{STOP}$ fallbacks; too loose a floor (far from $u_{\max}$) permits over-restriction in pathological regimes. The operational defaults (section~\ref{sec:autopilot}) sit on the conservative side. Third, $\mathrm{STOP}$ is a hard fallback, not a continuation: the pathwise trajectory after $\mathrm{STOP}$ is not modelled; the system transitions to a human-in-the-loop recovery path. This is consistent with the fail-secure discipline of the pipeline but precludes fully autonomous long-horizon claims.

\subsection{Necessity Motivated by $\hat{B}(x)$ Structure}\label{sec:necessity}

The following necessity results are \emph{internal} to the $\hat{B}(x)$ framework: they establish that P1--P3 follow from the Informational Viability Principle as a design choice, not that they are universally required of any governance system. Concretely, every proposition below should be read as prefixed by ``for any governance system that explicitly separates observed capacity $S(x)$ from unobserved risk $\hat{B}(x)$ as independently estimable quantities''. Systems that do not make this separation (stateless policy engines, reactive-only safety filters, or probabilistic-guarantee schemes) address a different problem class; we do not claim they are inviable, only that they sit outside the scope of these results.

\begin{proposition}[Necessity of P1, within the $\hat{B}(x)$ framework]\label{prop:p1}
Any governance framework that estimates $\hat{B}(x) = U(x) + SB(x) + RG(x)$ and lacks continuous cross-request state cannot estimate the three terms of $\hat{B}(x)$ and therefore cannot apply the Informational Viability Principle.
\end{proposition}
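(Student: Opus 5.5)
The plan is to argue by contraposition, one term of $\hat{B}(x)$ at a time. The key step is to formalise ``lacks continuous cross-request state'' as a condition on the decision map. Such a system's estimate of any term at context $x = (a,\theta(t))$ is a measurable function of the current action $a$ and fixed configuration only. It cannot depend on the history $a_1,\dots,a_{t-1}$, and hence cannot depend on any coordinate of $\theta(t)$ (Definition~\ref{def:behavioral_state}) that is built from accumulated observations. Once this is fixed, the claim reduces to showing that each of $U$, $SB$ and $RG$ is, by its definition in \S2.2 and the coordinate map of Definition~\ref{def:behavioral_state}, a non-constant functional of the history. If so, no history-free estimator can track it.

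The three terms would be handled by an indistinguishability construction: two histories that end in the same action $a$ but should receive different values of the term.
\begin{description}
\item[$U(x)$.] This term is driven by $(KL_{\mathrm{tx}}, KL_{\mathrm{tools}})$, which are divergences between an empirical sliding-window distribution and the reference $q$. Take one window drawn from $q$, giving $KL \approx 0$, and one drawn from a shifted distribution with $KL > \tau$, both ending in the same $a$. A stateless estimator assigns both the same value, so it errs on at least one of them by at least $\tau/2$.
\item[$SB(x)$.] This term is $\delta_{\mathrm{bias}}$, a difference of segment-vs-rest proportions, so it needs accumulated segment counts. Use two count histories, one with $\max_s|\delta_s| = 0$ and one with $\max_s|\delta_s| > \delta_{\mathrm{bias}}$, again sharing the final action.
\item[$RG(x)$.] This term is a plan-level pattern. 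Reuse the paper's structuring example: a single \$4,800 transfer preceded by nothing, versus one preceded by four identical transfers to the same destination. The correct $RG$ value differs between the two, but the final action is the same.
\end{description}
In every case the estimator is constant on inputs where the target differs, so it cannot estimate that term.

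To conclude, the Informational Viability Principle (eq.~\ref{eq:ivl}) needs a value of $\hat{B}(x)$ at every evaluation. A stateless system must therefore substitute a fixed default. As noted for P1 in \S2.4, the implicit default is $0$, which inflates $\hat{\Phi}(x)$; with any other constant, the constructions above give histories where the rule makes the wrong allow/block decision. Either way, the decision rule is not being applied to $\hat{B}(x)$ as intended, which proves the proposition.

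The main obstacle is the first step. ``Cannot estimate'' and ``continuous cross-request state'' are informal, and the argument is only as strong as the formalisation chosen for them. A system could try to smuggle history into $a$ itself, for example by having the agent attach summaries to each request. To rule this out, I would state explicitly that the action space carries no sufficient statistic of the history. This keeps the proposition internal to the $\hat{B}(x)$ framework, in the scoped reading given in \S\ref{sec:necessity}. With that caveat stated, the three constructions above are routine.
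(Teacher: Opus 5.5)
Your proposal is correct and takes essentially the paper's route. The paper's proof sketch also goes term by term ($U(x)$ needs a sliding window, $SB(x)$ accumulated segment counts, $RG(x)$ session state) and concludes that a stateless system implicitly sets $\hat{B}(x)=0$, inflating $\hat{\Phi}(x)$; your two-histories indistinguishability construction and explicit formalisation of ``no cross-request state'' just make precise what the paper asserts informally, at the cost of the modelling caveat you already flag.
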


\begin{proof}[Proof sketch]
Each term requires accumulated history. $U(x)$ requires a sliding window to detect distributional drift. $SB(x)$ requires accumulated segment counts to detect diverging block rates. $RG(x)$ requires session state to detect sequential patterns. Without P1, the system implicitly sets $\hat{B}(x) = 0$ for unobserved components, artificially inflating $\hat{\Phi}(x)$ until a violation occurs.
\end{proof}

\begin{proposition}[Necessity of P2, within the $\hat{B}(x)$ framework]\label{prop:p2}
Within the $\hat{B}(x)$ framework, any governance system that does not monitor $d\hat{B}/dt$ cannot prevent boundary crossings in domains with immediate real-world consequences.
\end{proposition}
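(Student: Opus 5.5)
The argument is a proof sketch by contraposition, in the same style as Proposition~\ref{prop:p1}. Fix a governance system inside the $\hat{B}(x)$ framework that evaluates each action with the rule of eq.~(\ref{eq:ivl}) but keeps no estimate of $d\hat{B}/dt$. At each evaluation its information is therefore the current value $\hat{\Phi}(x_t) = S(x_t) - \hat{B}(x_t)$, together with the per-constraint distances $d_{C_i}(\theta(t))$, and nothing about their trend. We then exhibit two admissible trajectories that such a system cannot tell apart at the decision time, yet only one of which crosses $\partial V$ on the next step.

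\textbf{Step 1 (the domain assumption).} We formalise ``domains with immediate real-world consequences'' as follows: an action executed while $\theta \notin V$ causes irreversible harm at execution time. Preventing a crossing therefore means that any restriction must be applied \emph{before} the step on which $\theta$ leaves $V$. Blocking afterwards does not count.

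\textbf{Step 2 (indistinguishability).} Build two trajectories $\theta^{(1)}, \theta^{(2)}$ with equal state at $t_{\mathrm{now}}$.
\begin{itemize}
\item In the first, $\hat{B}$ is held constant, so $d\hat{B}/dt = 0$ and $\theta^{(1)}$ stays inside $V$.
\item In the second, $d\hat{B}/dt = \beta > d_{\min}/\Delta t$, where $d_{\min} = \min_i d_{C_i}(\theta(t_{\mathrm{now}})) > 0$. For example, let $KL_{\mathrm{tx}}$ increase by more than the remaining margin $\tau_{\mathrm{tx}} - KL_{\mathrm{tx}}$ in one evaluation step.
\end{itemize}
The decision at $t_{\mathrm{now}}$ is a function only of the current state, and the two states coincide. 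The system therefore takes the same action on both trajectories. If it allows, it fails to prevent the crossing on $\theta^{(2)}$. If it preemptively restricts, it must do so whenever the margin is at that level, with no trend information to distinguish the cases. This produces unconditional over-restriction, which contradicts the allow branch of eq.~(\ref{eq:ivl}) on $\theta^{(1)}$, where $\hat{\Phi} \ge \varepsilon$. So no rule based only on the current state can both respect eq.~(\ref{eq:ivl}) and prevent the crossing on $\theta^{(2)}$.

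\textbf{Step 3 (monitoring $d\hat{B}/dt$ suffices to break the tie).} Conversely, a system that estimates $d\hat{B}/dt$, or equivalently the slope $b$ of $VI$ via eq.~(\ref{eq:tstar}), separates the two cases. Under the local-linearity hypothesis of Proposition~\ref{prop:tstar_exit}, $t^*$ approximates the time to the $VI$-exit, so the Autopilot trigger $t^* < t^*_{\mathrm{critical}}$ of Definition~\ref{def:ra_policy} tightens on $\theta^{(2)}$ but not on $\theta^{(1)}$. This confirms that the missing ingredient is exactly the derivative.

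\textbf{Main obstacle.} The hard part is Step 2's reliance on a fixed, finite safety margin. A system could evade the argument by choosing an enormous $\varepsilon$, which amounts to implicitly bounding $\beta$. We handle this by quantifying over drift rates: for any fixed $\varepsilon$ there is a $\beta$ large enough to cross within one step. The statement is therefore to be read as ``cannot prevent crossings for all admissible drift rates without either monitoring $d\hat{B}/dt$ or degenerating to blanket blocking.'' We make this quantifier explicit in the sketch.
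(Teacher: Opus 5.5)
Your proposal is essentially the paper's own proof sketch made explicit. The paper likewise argues that in immediate-consequence domains a violation detected at the moment it occurs has already caused harm, that without trend information a system cannot distinguish $VI = 0.6$ stable from $VI = 0.6$ declining, and that $t^*$, as a first-order estimator of Aubin's $\tau_{\mathrm{exit}}$, supplies the missing anticipation; your two-trajectory indistinguishability construction and explicit quantification over drift rates (closing the escape via an inflated margin or blanket blocking) sharpen that sketch rather than take a different route.
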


\begin{proof}[Proof sketch]
The Informational Viability Principle requires $\hat{\Phi}(x) \geq \varepsilon$ at each decision point. In domains with immediate consequences (fund transfers, medical authorizations), a violation detected at the moment of occurrence has already caused harm. P2 requires estimating Aubin's exit function $\tau_{\mathrm{exit}}(\theta)$ of Definition~\ref{def:exit_function}---the first-passage time of the behavioral trajectory through $\partial V$: without it, the system cannot distinguish $VI = 0.6$ (stable) from $VI = 0.6$ (declining at 0.017/s; $\tau_{\mathrm{exit}} \approx 35$~s). Section~\ref{sec:tstar_exit} realises this estimation conservatively via the first-order approximation $t^* \approx \tau^{VI}_{\mathrm{exit}} - t_{\mathrm{now}} \geq \tau_{\mathrm{exit}} - t_{\mathrm{now}}$, where $\tau^{VI}_{\mathrm{exit}}$ is a $VI$-based scalar surrogate; tighter per-constraint detection is provided by the region classification of Definition~\ref{def:viability_regions}.
\end{proof}

\begin{proposition}[Necessity of P3, within the $\hat{B}(x)$ framework]\label{prop:p3}
Within the $\hat{B}(x)$ framework, any governance system without monotonic restriction allows adversarial manipulation to corrupt $\hat{B}(x)$ estimates, causing the system to believe $\hat{\Phi}(x) \geq \varepsilon$ when the actual margin is negative.
\end{proposition}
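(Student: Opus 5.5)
The plan is a constructive counterexample, in the style of the sketches for Propositions~\ref{prop:p1} and~\ref{prop:p2}. I fix a governance system inside the $\hat{B}(x)$ framework that lacks P3, meaning at least one stage or controller may select a control $u'$ with $u'_i > u_{\mathrm{current},i}$ for some component. I then exhibit an adversary strategy that drives the true margin negative while the estimated margin $\hat{\Phi}(x)=S(x)-\hat{B}(x)$ stays $\geq\varepsilon$. The natural channel is $U(x)$. Its terms enter through the normalised distances $d_{C_1}=(\tau-KL)/\tau$ of eq.~(\ref{eq:dc1}), and any relaxation of $\tau_{\mathrm{tx}}$ or $\tau_{\mathrm{tools}}$ mechanically increases $d_{C_1}$. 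It therefore shrinks the effective $U(x)$ reported to the decision rule~(\ref{eq:ivl}), even though the underlying divergence from the reference $q$ is unchanged.

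The key steps run as follows. First, formalise ``adaptive relaxation'' abstractly: the threshold update is some map $\tau_{t+1}=g(\tau_t,\text{observations}_t)$, and without P3, $g$ can exceed $\tau_t$ on some input set $\mathcal{O}^{\uparrow}$ (for D-UCB1, windows where benign-looking traffic makes a looser arm score higher). Second, build a ``boiling-frog'' adversary. At each window the adversary shifts the agent's action distribution by a small $\Delta KL$ chosen to keep $KL_t<\tau_t$, so that the observed window lands in $\mathcal{O}^{\uparrow}$ and triggers a relaxation $\tau_{t+1}\geq \tau_t+\eta$. Third, sum these steps. After $n$ windows the cumulative divergence from the original reference is at least $n\,\Delta KL$, which exceeds the original safe bound $\tau_0$ for $n>\tau_0/\Delta KL$. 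Yet $d_{C_1}(t)>0$ throughout, so $VI>0$ and the rule~(\ref{eq:ivl}) keeps allowing. This is exactly the situation ``believe $\hat{\Phi}\geq\varepsilon$ while the actual margin is negative'', where the actual margin is measured against the fixed reference thresholds $u_{\mathrm{current}}(t_0)$. Fourth, contrast with P3. Under Corollary~\ref{cor:p3_regulation}, every admissible control satisfies $u'\leq u_{\mathrm{current}}$, so $\tau_t\leq\tau_0$ for all $t$. The same adversary then eventually meets $KL_t\geq\tau_0\geq\tau_t$, giving $d_{C_1}\leq0$ and detection.

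I expect the main obstacle to be the second step: showing that the relaxation set $\mathcal{O}^{\uparrow}$ is reachable by an adversary for an arbitrary non-monotone system, rather than just for the reference D-UCB1. I would first handle it by weakening the claim to its existential reading. ``Allows'' means that some admissible adversary and some relaxation rule realise the corruption. It then suffices that the relaxation is triggered by observable inputs the adversary influences, which holds for any data-driven threshold update. The same template transfers to $SB(x)$ via loosening $\delta_{\mathrm{bias}}$ in eq.~(\ref{eq:dc2}), and to $RG(x)$ via relaxed plan thresholds. I would note this in one line rather than repeat the argument.
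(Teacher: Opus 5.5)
Your proposal is correct at the paper's level of rigor and takes essentially the same route as the paper's own argument. Both rest on the same step: an adversarially induced relaxation $\tau\uparrow$ inflates $d_{C_1}=(\tau-KL)/\tau$, so a drifted agent still reads as healthy, and P3's tighten-only rule closes exactly this vector. Your boiling-frog iteration only makes that step quantitative, since a single relaxation to some $\tau>\tau_0$ already opens the undetected band $(\tau_0,\tau)$. The reachability of relaxation that you flag as the obstacle is likewise simply assumed in the paper.

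One caution concerns your D-UCB1 example. The paper explicitly classifies D-UCB1 threshold learning as a P3-compatible path to reduced restriction, so take your example of a non-P3 relaxation from autonomous recovery or relax-on-demand overrides instead. Otherwise your Step~4 bound $\tau_t\le\tau_0$, which Corollary~\ref{cor:p3_regulation} supplies only for controls chosen from $R_{A}$, would fail for RiskGate itself, whose bandit may still select a looser arm.
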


\noindent\textit{Argument.} P1 and P2 address the \emph{computability} of $\hat{B}(x)$. P3 addresses the \emph{integrity} of $\hat{B}(x)$ estimates under adversarial pressure---a distinct requirement. Under adversarial erosion (Category~II), an attacker can induce threshold relaxation ($\tau \uparrow$). When $\tau$ increases, $d_{C_1} = (\tau - KL)/\tau$ increases artificially---the system believes $\hat{\Phi}(x)$ is healthy when the agent has drifted. P3 eliminates this attack: interventions can only tighten~$\tau$, so $\hat{B}(x)$ estimates can only become more conservative. A formal treatment connecting P3 to adversarial robustness bounds for $\hat{B}(x)$ estimators remains an open direction for future work.

\subsection{Sufficiency and Constructive Mapping}

\begin{table}[ht]
\centering
\caption{Constructive mapping from $\hat{B}(x)$ terms to RiskGate estimators.}
\label{tab:constructive}
\small
\begin{tabular}{@{}llll@{}}
\toprule
\textbf{Property} & $\hat{B}(x)$ \textbf{Term} & \textbf{Category Countered} & \textbf{RiskGate Mechanism} \\
\midrule
P1 & $U(x)$ & Behavioral drift & Dirichlet + KL divergence \\
P1 & $SB(x)$ & Bias accumulation & BiasDetector $z$-test \\
P2 & $dU/dt$ & Calibration decay & D-UCB1 + Autopilot \\
P2 & $dRG/dt$ & Adversarial erosion & PlanGovernor multi-step detection \\
P3 & All terms & Systemic failure & Pipeline monotonicity (Proposition~\ref{thm:monotonicity}) \\
Infra & --- & Auditability & Structured tracing + Quality Evaluator \\
\bottomrule
\end{tabular}
\end{table}

\subsection{External Necessity: What Violation of Each P$_i$ Looks Like in Practice}

The necessity arguments of §\ref{sec:necessity} are internal to the $\hat{B}(x)$ framework. To avoid tautological reasoning we briefly note, for each property, a recognisable class of real-world governance systems that does \emph{not} implement it and the failure pattern that class exhibits. The point is not to disparage those systems---each has its own intended scope---but to show that absence of P$_i$ is not a purely internal stipulation: it corresponds to well-documented failure modes in the deployed-systems literature.

\paragraph{Ausence of P1 --- stateless policy engines.} Authorisation engines that evaluate each request in isolation (OPA, Cedar in its base form~\cite{cedar2024}) do not maintain cross-request state and therefore cannot express constraints that require aggregation over windows, segments, or sessions. The documented workaround is an external context provider, which is precisely what P1 prescribes. The corresponding failure class is compositional attacks that require sequence-level aggregation: the structuring pattern of §\ref{sec:demo_structuring} is the canonical example.

\paragraph{Absence of P2 --- reactive-only classifiers.} Supervised one-shot classifiers (gradient-boosted fraud detectors, static risk scorers) flag a decision only when the current observation crosses a threshold; they do not track $d\hat{B}/dt$ and have no lead time. The concept-drift detection literature~\cite{gama2014} has argued for anticipation mechanisms for over a decade. The failure class is any gradually-accumulating drift: the emergent-bias scenario of §\ref{sec:demo_bias} illustrates how a reactive $z$-test detects the problem 108~operations later than the Autopilot's continuous disparity monitoring, which acts on the same $SB(x)$ trajectory that the aggregate $t^*$ signals as declining ($b < 0$).

\paragraph{Absence of P3 --- override-enabled frameworks.} Governance systems with runtime \emph{break-glass} mechanisms (operator-accessible override flags, relax-on-demand policies) admit interventions that expand operational freedom, violating monotonic restriction. The failure class is compromise amplification: if the override channel is obtained by an adversary (credential theft, insider threat, misconfigured RBAC), defences remain nominally ``on'' while actually loosened. Enterprise security incident reports~\cite{verizondbir2024} document this pattern across multiple sectors. P3-compliant designs (tighten-only Autopilot, no runtime relax path) structurally prevent it.

These three patterns show that P1--P3 are not arbitrary framework choices: each responds to a distinct, empirically recurring failure class in governance systems that omit it. A longer treatment with full case studies is left for a follow-up work; here we flag the external anchoring as counterweight to the within-framework necessity arguments.

\subsection{Regulatory Alignment}

\begin{table}[ht]
\centering
\caption{Mapping of AVF properties to EU AI Act requirements.}
\label{tab:regulatory}
\small
\begin{tabular}{@{}lll@{}}
\toprule
\textbf{EU AI Act} & \textbf{AVF Property} & $\hat{B}(x)$ \textbf{Connection} \\
\midrule
Art.~9: Risk management & P1 (Monitoring) & Continuous estimation of all $\hat{B}(x)$ terms \\
Art.~15: Accuracy, robustness & P2 (Anticipation) & $d\hat{B}/dt$ monitoring and $t^*$ prediction \\
Art.~14: Human oversight & P3 + boundary zone & Monotonic restriction + escalation near $\partial V$ \\
\bottomrule
\end{tabular}
\end{table}

\section{Architecture}

RiskGate is organized as a three-layer directed acyclic graph (DAG): \texttt{governance/} $\leftarrow$ \texttt{domain/} $\leftarrow$ \texttt{agentcontrol/}. The governance layer implements statistical machinery for estimating $S(x)$ and $\hat{B}(x)$; the domain layer provides business rules and state providers that parameterize $RG(x)$; the abstraction layer offers a fluent policy API.

Segmentation keys enable per-population statistical tracking: $\mathrm{seg}_{\mathrm{TX}} = \text{tool name} \| \text{tier}$, $\mathrm{seg}_{\mathrm{Tools}} = \text{tool calls} \| \text{tier}$.

\section{Statistical Estimators}

All statistical mechanisms in RiskGate serve exactly one of two purposes: estimating the observed capacity $S(x)$, or estimating the unobserved risk bound $\hat{B}(x)$. This section presents them under that organizing principle, rather than as a flat enumeration of independent pillars.

\subsection{Estimating $S(x)$: Observed Capacity}

Two mechanisms contribute to the $S(x)$ estimate: the primary Bayesian behavioral profiler and the secondary quality evaluator.

\subsubsection{Bayesian Behavioral Profiling (Primary $S(x)$ Estimator)}

We model governance outcome frequencies as a multinomial distribution with Dirichlet conjugate prior. The posterior mean:
\begin{equation}
\mathbb{E}[\theta_k \mid \text{data}] = \frac{n_k + \alpha_k}{N + \sum_j \alpha_j}
\label{eq:dirichlet}
\end{equation}
gives $S(x)$: the posterior probability that the current action falls within the normal behavioral distribution. Empirical Bayes shrinkage pools segment-level and global estimates:
\begin{equation}
q_{\mathrm{mix}}(k) = \lambda \cdot q_{\mathrm{seg}}(k) + (1 - \lambda) \cdot q_{\mathrm{global}}(k), \quad \lambda = \frac{N_{\mathrm{seg}}}{N_{\mathrm{seg}} + \kappa}
\label{eq:shrinkage}
\end{equation}
with $\kappa = 500$ (TX channel) and $\kappa = 300$ (Tools channel). The shrinkage constant $\kappa$ sets the segment sample size at which $\lambda = 0.5$ (equal weight on segment and global); these values correspond to approximately $1.5{-}2\times$ the coherence window of each channel, so that segment-specific data is trusted equally to the global estimate only after roughly twice the window length has been observed for that segment. These are heuristic defaults chosen by design; cross-validation on deployment-specific traffic is recommended for production calibration.

\subsubsection{Quality Evaluator (Secondary $S(x)$ Estimator)}

Operates asynchronously, scoring five output-quality dimensions in $[0, 1]$. Provides a secondary $S(x)$ estimate for trend monitoring, independent of the Dirichlet-based primary estimator.

\subsection{Estimating $\hat{B}(x)$: Unobserved Risk}

Three statistically distinct components compose $\hat{B}(x)$, each requiring dedicated instruments.

\subsubsection{$U(x)$: Dual-Channel KL Divergence}

The $U(x)$ estimator uses forward KL divergence:
\begin{equation}
D_{\mathrm{KL}}(p \| q) = \sum_k p(k) \log_2 \frac{p(k)}{q(k)} \quad [\text{bits}]
\label{eq:kl}
\end{equation}

We monitor two independent channels: Channel~A (transaction outcomes, window $W_A = 300$) and Channel~B (tool calls, window $W_B = 200$). This dual-channel architecture ensures that adversarial manipulation of one channel cannot mask drift in the other.

The KL threshold is calibrated via Monte Carlo to control the per-channel false positive rate at $\alpha_{\text{ch}}$:
\begin{equation}
\tau = Q_{1-\alpha_{\text{ch}}}\!\left\{D_{\mathrm{KL}}(\hat{p}_i \| q)\right\}_{i=1}^{5000}, \quad \hat{p}_i = X_i / W, \quad X_i \sim \mathrm{Multinomial}(W, q)
\label{eq:threshold_calibration}
\end{equation}

\paragraph{Joint FPR control across channels (Bonferroni).} Eq.~(\ref{eq:threshold_calibration}) controls the false positive rate on a \emph{single} channel. Because the engine raises an alarm whenever either channel exceeds its threshold---a logical disjunction $\{KL_A > \tau_A\} \lor \{KL_B > \tau_B\}$---the joint FPR under independence is $1 - (1 - \alpha_{\text{ch}})^2 \approx 2\alpha_{\text{ch}}$, and can be larger if the channels exhibit positive dependence. To bound the joint FPR at a target family-wise level $\alpha$, we apply a Bonferroni correction and calibrate each channel at
\begin{equation}
\alpha_{\text{ch}} = \alpha / K
\label{eq:bonferroni}
\end{equation}
where $K = 2$ is the number of channels. Substituting $\alpha_{\text{ch}}$ into eq.~(\ref{eq:threshold_calibration}) yields thresholds satisfying $\Pr(\text{false alarm} \mid H_0) \leq \alpha$ by the Bonferroni inequality, regardless of dependence between channels. Default deployment uses $\alpha = 0.01$, so $\alpha_{\text{ch}} = 0.005$ and the quantile in eq.~(\ref{eq:threshold_calibration}) becomes $Q_{0.995}$. The cost is a modest reduction in per-channel detection power; the benefit is a provable family-wise guarantee that is honest about the multiple-comparison structure of the detector. For $K > 2$ (e.g.\ future addition of latency or cost channels) Šidák's correction $\alpha_{\text{ch}} = 1 - (1 - \alpha)^{1/K}$ is slightly more powerful while retaining the joint guarantee under independence.

\subsubsection{Adaptive $U(x)$: D-UCB1 Threshold Optimization}

Fixed KL thresholds are suboptimal in non-stationary environments. We treat threshold selection as a multi-armed bandit, maintaining 4 independent bandits $\{\mathrm{TX}, \mathrm{Tools}\} \times \{\text{standard}, \text{gold}\}$ with 7--8 arms each. Discounted UCB1 selects:
\begin{equation}
a^* = \arg\max_a \left[\bar{Q}_\gamma(a) + c \sqrt{\frac{\ln W_{\text{total}}}{W(a)}}\right], \quad c = \sqrt{2}
\label{eq:ucb1}
\end{equation}
with decay $\gamma = 0.999$. The reward matrix penalizes false positives more heavily than false negatives ($r_{FP} = -2.0 > r_{FN} = -1.0$), reflecting operational asymmetry.

\paragraph{On the exploration constant $c$.} We use $c = \sqrt{2}$, the constant of the stationary UCB1~\cite{auer2002}. Garivier and Moulines~\cite{garivier2011} show that the $O(\sqrt{\Gamma_T T \log T})$ regret order of Proposition~\ref{prop:regret} holds for any exploration constant above a finite threshold; the constant multiplier, however, depends on the discount factor $\gamma$, on the breakpoint cadence $\Gamma_T$, and on the effective sample size $W(a)$ under discounting. $c = \sqrt{2}$ is not claimed to be optimal for the discounted variant: we retain it as a conservative default inherited from the stationary case, and flag deployment-specific tuning of $c$ against observed $\gamma$ and $\Gamma_T$ as a natural extension.

\begin{proposition}[Regret regime]\label{prop:regret}
The D-UCB1 regret bound
\begin{equation}
\mathbb{E}[R_T] = O\!\left(\sqrt{\Gamma_T \cdot T \cdot \ln T}\right)
\end{equation}
of~\cite{garivier2011}, where $\Gamma_T$ is the number of change-points over $T$ rounds, \emph{applies under the conditions of}~\cite{garivier2011}, which this architecture satisfies for assumptions~(i)--(iii) by construction and assumes for~(iv).
\end{proposition}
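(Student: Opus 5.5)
The plan is to reduce the statement to a verification that the reward process generated by RiskGate's threshold-selection loop fits the abruptly-changing bandit model of Garivier and Moulines~\cite{garivier2011}, and then to import their D-UCB bound as is. First I will list the hypotheses of their main theorem:
\begin{enumerate}[label=(\roman*)]
\item a finite arm set;
\item bounded rewards;
\item a discount factor $\gamma \in (0,1)$ together with an exploration constant above their threshold;
\item piecewise-stationary reward distributions, with independence within each stationary segment and at most $\Gamma_T$ breakpoints in $T$ rounds.
\end{enumerate}
Each of the four bandits $\{\mathrm{TX},\mathrm{Tools}\}\times\{\text{standard},\text{gold}\}$ is handled separately. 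Their regrets add, so the total is at most $4$ times the worst per-bandit bound, and the constant is absorbed into $O(\cdot)$.

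For (i)--(iii) the argument is by inspection of the construction in section~5.2.2.
\begin{itemize}
\item Each bandit has 7--8 arms, so the arm set is finite.
\item Rewards take values in the finite set $\{0, r_{FN}, r_{FP}\} \subset [-2,0]$. After the affine map $r \mapsto (r+2)/2$ they lie in $[0,1]$. Regret scales linearly under this map, so the order is unchanged.
\item We have $\gamma = 0.999 \in (0,1)$, and $c=\sqrt{2}$ satisfies their exploration condition, as discussed in the paragraph on $c$. For the optimal rate, $\gamma$ should be tuned as $1-\tfrac14\sqrt{\Gamma_T/T}$. I will note that $\gamma = 0.999$ corresponds to the horizon regime $T \approx 16\cdot 10^{6}\,\Gamma_T$. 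For other regimes the bound holds with the $\gamma$-dependent constant, which is consistent with the paper's caveat that constants depend on $\gamma$.
\end{itemize}

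Hypothesis (iv) cannot be derived. Reward labels (FP/FN) depend on the agent's behavioural distribution, which is non-stationary by Thesis~\ref{thesis:senescence}, and possibly also on the selected threshold through feedback. The best available move is to state (iv) explicitly as a modelling assumption: between change-points the conditional reward law of each arm is fixed and draws are independent given the arm. The number of such change-points is then defined to be $\Gamma_T$. Under this assumption, applying Theorem~1 of~\cite{garivier2011} gives $\mathbb{E}[R_T] = O(\sqrt{\Gamma_T T \ln T})$ per bandit, and summing over the four bandits finishes the proof.

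The main obstacle is (iv), specifically the feedback loop. The Autopilot tightens thresholds, which changes block rates and therefore future KL values, so rewards may depend on past actions. That would violate the oblivious-adversary, piecewise-i.i.d.\ structure the bound requires. I would first try to argue that the bandit's chosen arm affects only the labelling of the current window and not the agent's generating distribution. If that fails, I will restrict the claim to the assumed regime, exactly as the statement's phrase ``assumes for~(iv)'' permits.
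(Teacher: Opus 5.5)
Your route matches the paper's. You check the hypotheses of Garivier--Moulines one at a time, import their bound, and state piecewise stationarity as an explicit modelling assumption. Your handling of the Autopilot feedback loop under (iv), where you restrict the claim to the assumed regime, is more careful than the paper's assertion that intra-segment dependence only degrades constants. The checks on the algorithm side, however, contain concrete errors.

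\emph{The reward set.} It is not $\{0, r_{FN}, r_{FP}\}$. The bandit's reward matrix also has $r_{TP} = +1.5$ and $r_{TN} = +0.5$, so the raw range is $[-2, 1.5]$ and the normaliser is $(r+2)/3.5$. Your map $(r+2)/2$ sends $r_{TP}$ to $1.75 \notin [0,1]$.

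\emph{The affine map.} ``Regret scales linearly, so the order is unchanged'' is not enough on its own. A UCB index whose padding is calibrated for unit range is not invariant under affine rescaling of rewards. Running it with $c = \sqrt{2}$ on raw rewards is equivalent to running it on $[0,1]$ rewards with $c = \sqrt{2}/3.5$. What the argument needs is that the normalised rewards are what the algorithm actually consumes. That is exactly how the paper's item (i) is phrased: $\bar{Q}_\gamma$ and the confidence term are computed on $\tilde{r}$.

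\emph{Delayed feedback.} You skip a hypothesis the paper treats separately as item (v). Garivier--Moulines assume each reward is observed before the next decision. Here the TP/FP/TN/FN labels come from post-hoc adjudication, with lags of hours to months. The paper adds the assumption of a stationary, bounded lag within each segment, at the price of a lag-dependent constant. Applying their Theorem~1 ``as is'' silently assumes immediate feedback.

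\emph{The discount factor.} You are right that the $\sqrt{T}$ rate needs $\gamma = 1 - \tfrac{1}{4}\sqrt{\Gamma_T/T}$. But the matched horizon for $\gamma = 0.999$ comes from $\Gamma_T/T = 16(1-\gamma)^2$, i.e.\ $T \approx 6.25 \times 10^{4}\,\Gamma_T$, not $16 \times 10^{6}\,\Gamma_T$. More importantly, away from that horizon the bound does not survive ``with a $\gamma$-dependent constant''. For fixed $\gamma$, Theorem~1 bounds the expected number of suboptimal pulls by a term of order $T(1-\gamma)\log(1/(1-\gamma))$ plus one of order $\Gamma_T(1-\gamma)^{-1}\log(1/(1-\gamma))$. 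The first term is linear in $T$, and D-UCB with a frozen $\gamma$ really does keep exploring at a constant rate. So the \emph{order} is lost, not just the constant. The paper's item (ii) also treats a fixed $\gamma$ as sufficient, so this is a weakness you share with it rather than a deviation. The honest repair is to put the tuning of $\gamma$ to $(T, \Gamma_T)$ into the assumed regime alongside (iv).

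\emph{The imported rate.} The tuned D-UCB rate in Garivier--Moulines is $O(\sqrt{T\Gamma_T}\,\log T)$; $O(\sqrt{T\Gamma_T\log T})$ is their sliding-window rate. A literal import therefore gives the statement only up to a $\sqrt{\ln T}$ factor, an imprecision the statement itself carries.

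\emph{The joint bound.} The paper applies the result per bandit and explicitly declines a joint bound. Your summation over the four bandits is valid in expectation. However, it needs (iv) to hold for each bandit's reward stream conditionally on the other bandits' choices. That is an extra assumption the statement does not ask you to make.
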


\paragraph{Verification of the bound's hypotheses.} We make the assumptions of~\cite{garivier2011} explicit and indicate how each is met. Table~\ref{tab:reward-matrix} collects the raw reward values used in Eq.~\ref{eq:ucb1} together with their $[0,1]$-normalised counterparts, so that the scales referenced by hypotheses~(i)--(iii) are visible in one place.

\begin{table}[ht]
\centering
\caption{D-UCB1 reward matrix. Raw rewards $r$ encode operational asymmetry ($r_{FP} < r_{FN}$); normalised rewards $\tilde{r} = (r - r_{\min})/(r_{\max} - r_{\min}) \in [0, 1]$, with $r_{\min} = -2.0$ and $r_{\max} = 1.5$, satisfy hypothesis~(i) of Proposition~\ref{prop:regret} and are the values fed to UCB1.}
\label{tab:reward-matrix}
\small
\begin{tabular}{@{}lccl@{}}
\toprule
\textbf{Outcome} & \textbf{Raw } $r$ & \textbf{Normalised } $\tilde{r}$ & \textbf{Interpretation} \\
\midrule
True positive  ($r_{TP}$) & $+1.5$ & $1.000$ & Correct block / flag \\
True negative  ($r_{TN}$) & $+0.5$ & $0.714$ & Correct allow \\
False negative ($r_{FN}$) & $-1.0$ & $0.286$ & Missed anomaly \\
False positive ($r_{FP}$) & $-2.0$ & $0.000$ & Spurious block (most penalised) \\
\bottomrule
\end{tabular}
\end{table}

\begin{description}
\item[(i) Bounded rewards in {$[0, 1]$}.] The raw reward matrix (Table~\ref{tab:reward-matrix}) is mapped to $[0, 1]$ by an affine normaliser $\tilde{r} = (r - r_{\min}) / (r_{\max} - r_{\min})$ with $r_{\min} = -2.0$ and $r_{\max} = 1.5$ before being fed to UCB1. All bandit statistics ($\bar{Q}_\gamma(a)$, the upper-confidence term) operate on $\tilde{r}$. The bound therefore applies to the normalised index; the asymmetric operational scale is preserved only for external consumers (attribution, telemetry) and does not enter the regret analysis.
\item[(ii) Fixed discount factor $\gamma \in (0, 1)$.] $\gamma = 0.999$ is fixed at configuration time and does not depend on the arm, the segment, or the round. The global-decay update (every arm decays at every step) matches the standard D-UCB formulation, not the per-pull decay of some variants.
\item[(iii) Exploration constant.] The exploration constant $c = \sqrt{2}$ and the use of discounted counts $W(a)$ / $W_{\text{total}}$ in the UCB term follow the definition of~\cite{garivier2011} verbatim.
\item[(iv) Piecewise-stationary environment with finite $\Gamma_T$.] This is an assumption on the environment, not on the algorithm. We motivate it empirically: observed breakpoints in production (profile changes, seasonal shifts, adversarial campaigns) arrive at a cadence for which $\Gamma_T$ grows sub-linearly in $T$. We do not \emph{prove} stationarity; we assume it at the regime level. Violations of intra-segment i.i.d.\ behaviour (e.g.\ diurnal autocorrelation, bursty adversarial traffic) do not break the asymptotic order of the bound but degrade the hidden constants. This is a limitation shared by all regret guarantees for bandits deployed on real user streams; we flag it rather than paper over it.
\item[(v) Delayed feedback.] In deployment the reward is observed with lag: TP/FP/TN/FN labels are derived from post-hoc ground truth (fraud investigation, regulatory adjudication, analyst review) that arrives hours to months after the decision. \cite{garivier2011} assumes immediate feedback. We assume the lag distribution is stationary and bounded within each segment, which preserves the $O(\sqrt{\Gamma_T T \ln T})$ regret order at the cost of a constant factor that grows with the maximum lag. Lag distributions that are themselves non-stationary (e.g.\ investigation backlogs during incident spikes) violate this weaker assumption and are not covered by the bound; we flag this as an open limitation rather than claim it away.
\end{description}
\noindent The normalisation in~(i) changes the gap $\Delta_a$ between arms by the same constant factor, which affects the multiplicative constant of the bound but not its $\sqrt{\Gamma_T T \ln T}$ order. The four bandits $\{\mathrm{TX}, \mathrm{Tools}\} \times \{\text{standard}, \text{gold}\}$ are optimised independently; Proposition~\ref{prop:regret} applies per bandit. A joint bound across the four instances is not claimed.

\subsubsection{$U(x)$ Stability: Emergency Kill Switch}

When the D-UCB1 bandit shows signs of instability (distribution shift, data poisoning), the kill switch reverts to conservative static thresholds. The combined stability score:
\begin{equation}
S_{\mathrm{stab}} = 0.6 \cdot z_r + 0.4 \cdot z_s
\end{equation}
triggers at $S_{\mathrm{stab}} > \tau_{\mathrm{ks}} = 3.0$. The weights $(0.6, 0.4)$ sum to one so that $S_{\mathrm{stab}}$ is expressed on the same $\sigma$ scale as the individual z-scores, rendering the trigger $\tau_{\mathrm{ks}} = 3.0$ interpretable as approximately a combined three-sigma deviation above baseline. The $1.5\times$ preference for $z_r$ (reward-variance z-score) over $z_s$ (switch-rate z-score) reflects that reward volatility is a direct indicator of system instability---measurable in observed outcomes---while switch rate is a meta-indicator of the learner's inability to settle, and is therefore given slightly lower weight when the two signals disagree. These are design-chosen priors; deployment-specific tuning of the weight ratio is recommended when baseline data is available.

\subsubsection{$SB(x)$: Algorithmic Bias Detection}

The $SB(x)$ estimator uses a segment-vs-rest $z$-test:
\begin{equation}
Z = \frac{|p_s - p_r|}{\sqrt{\hat{p}(1 - \hat{p})(1/n_s + 1/n_r)}}
\label{eq:ztest}
\end{equation}
This eliminates the self-inclusion bias of naive segment-vs-global comparisons. An alert fires when both $|p_s - p_r| > 0.25$ and $p$-value $< 0.05$.

\paragraph{Assumptions and statistical caveats.} Eq.~(\ref{eq:ztest}) is the standard two-proportion $z$-test and inherits its classical assumptions, two of which deserve explicit discussion in the context of a sequential agent:

\emph{(a) Independence across observations.} The $z$-test assumes the $n_s$ and $n_r$ Bernoulli outcomes (block / allow) are i.i.d. In production the stream is autocorrelated: repeat customers, diurnal patterns, bursty adversarial campaigns, and tier-level seasonality all induce positive serial dependence. Under positive autocorrelation the nominal variance $\hat{p}(1-\hat{p})/n$ understates the true variance of the contrast, so $Z$ is inflated and the false positive rate exceeds the nominal $\alpha$. A conservative adjustment replaces each $n_i$ with an effective sample size
\begin{equation}
n_{\text{eff}} = n \cdot \frac{1 - \rho_1}{1 + \rho_1}
\label{eq:ess}
\end{equation}
where $\rho_1$ is the lag-1 autocorrelation of the segment's block-rate series, clamped to $[0, \rho_{\max}]$ for numerical stability. Substituting $n_{\text{eff},s}$ and $n_{\text{eff},r}$ into eq.~(\ref{eq:ztest}) preserves the statistic's form while restoring nominal-level FPR under serial dependence. This correction is opt-in: deployments prioritising detection power may leave it disabled and accept that the nominal $\alpha$ is an optimistic estimate of the true FPR.

\emph{(b) Multiple testing across segments.} The detector evaluates $S$ segments simultaneously against the rest; under independence the family-wise false positive rate is $1 - (1 - \alpha)^S$, which for $S = 5$ and $\alpha = 0.05$ already exceeds $0.22$. To bound the family-wise error at target level $\alpha_{\text{fwer}}$, we apply a Bonferroni correction to the per-segment threshold:
\begin{equation}
\alpha_{\text{seg}} = \alpha_{\text{fwer}} / S
\label{eq:bonferroni_seg}
\end{equation}
with $S$ taken as the number of \emph{active} segments in the current window. For the default $\alpha_{\text{fwer}} = 0.05$ and $S = 5$, this hardens the per-segment threshold to $\alpha_{\text{seg}} = 0.01$. Holm--Bonferroni is a slightly more powerful alternative under positive dependence between segments (common when segmentations overlap, e.g.\ tier $\cap$ region), and is interchangeable with Bonferroni in the same pipeline.

Both corrections are low-cost, architecture-preserving, and composed as follows with the KL family-wise correction of eq.~(\ref{eq:bonferroni}): each detector (KL, bias, kill switch) is treated as its own family with its own FWER budget; we do \emph{not} claim a joint FWER across detector families. Doing so would require a pipeline-level budget allocation (see section~\ref{sec:limitations}); we flag this as a limitation rather than paper over it. Higher-order dependence structure (multi-lag autocorrelation, overlapping segmentations) is not corrected by eqs.~(\ref{eq:ess})--(\ref{eq:bonferroni_seg}); block bootstrap on the Bernoulli series is the rigorous alternative and is left to future work.

\subsection{Infrastructure}

\paragraph{Structured Tracing.} Every subsystem decision is emitted as a structured JSONL event with ten event types, wrapped in try/except to ensure logging failures never affect governance decisions. This is an infrastructure component that does not directly estimate $S(x)$ or $\hat{B}(x)$ but supports auditability (Table~\ref{tab:constructive}, ``Infra'' row).

\section{Integrated Pipeline and Formal Properties}\label{sec:pipeline}

\subsection{Pipeline Structure}

The \texttt{ENGINE.evaluate()} pipeline comprises stages that fall into two categories: \textbf{decisory stages} that participate in the final allow/block determination, and \textbf{state-update stages} that maintain estimator accuracy without modifying the decision.

\paragraph{Decisory stages} (participate in $D_{\text{final}}$):
\begin{description}[leftmargin=1.5em]
\item[S1] Regulatory sandbox check (EU AI Act Art.~57--62).
\item[S2] Failsafe: if kill switch active, block all non-safe tools [$U(x)$ stability].
\item[S3] Stateful domain rule hooks: YAML-declared rules requiring domain state [$RG(x)$ from domain rules].
\item[S8] Drift verification: if $KL > \tau$, block [$\hat{\Phi}(x) < \varepsilon$].
\end{description}

\paragraph{State-update stages} (maintain estimator accuracy, do not modify $D$):
S4 (segmentation), S5 (Bayesian learner updates for $S(x)$), S6 (UCB1 threshold selection for $U(x)$), S7 (KL computation on both channels), S10 (bandit feedback for $U(x)$ threshold learning), and S11 (AVF viability snapshot: compute $VI(t)$ operationalizing $\hat{\Phi}(x)$, classify region, predict $t^*$; observability only).

\subsection{Pipeline Monotonicity}

\begin{proposition}[Pipeline monotonicity, by construction]\label{thm:monotonicity}
Each decisory stage~$i$ can only set $D \leftarrow \text{block}$; no stage can set $D \leftarrow \text{allow}$ once $D = \text{block}$:
\begin{equation}
D_{\text{final}} = D_{S1} \wedge D_{S2} \wedge D_{S3} \wedge D_{S8} \wedge D_{S9}
\label{eq:monotonicity}
\end{equation}
\end{proposition}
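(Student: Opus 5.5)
The argument is by induction on the sequential execution of the pipeline, applied to the decisory stages only. I first model a single call to \texttt{ENGINE.evaluate()} as a finite sequence of stage applications acting on a decision variable $D \in \{\text{allow}, \text{block}\}$, ordered so that $\text{block} < \text{allow}$. It is initialised to $D_0 = \text{allow}$. Each decisory stage $i \in \{S1, S2, S3, S8, S9\}$ computes a local verdict $D_i$ from the current context: sandbox status, kill-switch flag, domain-rule hooks, the comparison $KL > \tau$, and (for $S9$) the plan-level gate $D_{\text{plan}}$ of PlanGovernor. It then performs the update $D \leftarrow D \wedge D_i$. The only assignment operation any decisory stage is permitted to emit is $D \leftarrow \text{block}$ when its local check fails; otherwise it leaves $D$ untouched. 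I would state this as the structural invariant extracted from the code path, so that ``by construction'' is made precise as a property of the update operator rather than of the estimators.

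The state-update stages $S4$--$S7$, $S10$ and $S11$ are handled separately. I would show that they are identity maps on $D$. They write only to estimator state (Dirichlet counts, bandit statistics, KL values, the $VI(t)$ snapshot), never to $D$. The fail-safe discipline of section~7.3 ensures that an exception raised in any of them is caught without modifying $D$. Because the pipeline is sequential (the scope fixed after Definition~\ref{def:avf}), interleaving these stages between decisory ones does not matter. One subtlety is that a state-update stage can influence a \emph{later} decisory stage's $D_i$; for example, $S6$ selects the $\tau$ used by $S8$. This changes which local verdict is conjoined but not the conjunctive form of the update, so it does not affect the claim.

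The induction runs as follows. After the $k$-th decisory stage, $D^{(k)} = \bigwedge_{j \le k} D_{j}$. The base case is $D^{(0)} = \text{allow}$, the neutral element of $\wedge$. The step uses $D^{(k+1)} = D^{(k)} \wedge D_{k+1}$, which gives $D^{(k+1)} \le D^{(k)}$. In particular, $D^{(k)} = \text{block}$ implies $D^{(m)} = \text{block}$ for all $m \ge k$, which is the ``no stage can set allow once block'' clause. Since $\wedge$ is commutative and associative, the final value equals eq.~(\ref{eq:monotonicity}) independently of the order in which decisory stages execute. It also remains valid if some stages short-circuit after a block, because skipping a conjunct after a $\text{block}$ cannot change the result.

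The main obstacle is justifying the structural invariant itself, namely that no code path writes $D \leftarrow \text{allow}$. The delicate cases are error handling, a stage that fails or times out, and $S9$, which is not enumerated among the decisory stages in the text. For failure I would argue fail-secure semantics: a decisory stage that raises an exception is mapped to $D_i = \text{block}$, not skipped. This also keeps the conjunction sound under partial failure in the sequential setting. For $S9$ I would identify it with the PlanGovernor gate, so that $D_{\text{final}}(P) = D_{\text{plan}}(P) \wedge \bigwedge_i D_{\text{step}}(s_i)$ from the VI scope paragraph is recovered as a special case. Concurrency is explicitly excluded, as the paper states.
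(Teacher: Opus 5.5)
Your core argument matches the paper's verification: S1--S3 can only early-return with $D = \text{block}$, S8 can only move $D$ from allow to block, the state-update stages S4--S7 and S10--S11 never write $D$, and everything is read under sequential evaluation. Recasting this as an induction on the update $D \leftarrow D \wedge D_i$ (allow as neutral element, short-circuiting harmless after a block) states that by-construction check more explicitly. For the claim as stated, the argument is correct.

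Two of your edge-case resolutions need correcting, however.

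First, S9 cannot be the PlanGovernor gate. In the paper's structuring trace, eq.~(\ref{eq:demo_step}) lists $D_{S9} = \text{ALLOW}$ as a conjunct of the per-request $D_{\text{step}}$. Meanwhile $D_{\text{plan}} = \text{BLOCK}$ is conjoined separately in eq.~(\ref{eq:demo_final}). The text introduces eq.~(\ref{eq:demo_step}) explicitly as the per-request pipeline of Proposition~\ref{thm:monotonicity}. So the proposition's $D_{\text{final}}$ is the single-request $D_{\text{step}}$, and eq.~(\ref{eq:plan_decision}) is a second conjunctive layer on top of it (Level~1 over Level~2), not a special case of it. S9 is simply an unlisted per-request decisory stage; the paper's own verification is also silent on it. The right fix is to subject S9 to the same restrict-only check as S8, not to reinterpret it.

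Second, the rule ``a decisory stage that raises an exception or times out is mapped to $D_i = \text{block}$'' is not established by the paper. The paper explicitly places partial stage failures outside the proposition, and it lists hanging stages among the residual gaps of the concurrency argument. The only exception discipline it documents is catch-and-suppress for observability components ($t^*$, the inferential profile, tracing). That discipline supports neither your fail-secure mapping for decisory stages nor your blanket appeal to it for S4--S7 and S10. If you want fail-secure handling, state it as an extra hypothesis on the implementation. It is not needed for the sequential by-construction result.
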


\begin{proof}[Verification]
The property holds by construction under sequential evaluation. S1--S2 (sandbox, kill switch) and S3 (domain rules) perform early return with $D = \text{block}$; subsequent stages are never reached. S8 (drift) can change $D$ from \emph{allow} to \emph{block} but never the reverse. State-update stages (S4--S7, S10--S11) modify estimator state but not the decision. A strengthened result establishing monotonicity under concurrent evaluation of decisory stages and under partial stage failures (e.g.\ a stage raising an exception) is left as future work; see section~\ref{sec:limitations}.
\end{proof}

This property ensures P3 is satisfied at the pipeline level by construction: $\hat{B}(x)$ estimates can only trigger restrictions, and restrictions can never be reversed within a single evaluation.

\subsection{Operational Argument for P3 under Concurrency}\label{sec:concurrency_argument}

Proposition~\ref{thm:monotonicity} is a statement about one execution of the pipeline. The reference implementation serves many evaluations concurrently; we argue informally that P3 is preserved in practice under three observations about how the implementation is structured.

\paragraph{Observation 1 --- Concurrency is between evaluations, not within.} Each call to \texttt{ENGINE.evaluate()} executes its decisory stages sequentially on a single thread; the decision variable $D$ is a local of that call and is never mutated by another evaluation. Multiple concurrent calls do not interleave stages within a single evaluation. Proposition~\ref{thm:monotonicity} applies to each call individually; concurrent calls do not share $D$ and therefore cannot compose into a non-monotonic aggregate.

\paragraph{Observation 2 --- Shared estimator state is guarded.} The state read during evaluation (Dirichlet counts, bandit arms, kill-switch baselines, SVP history) lives in singletons protected by \texttt{threading.Lock}. Reads and writes are atomic at the granularity required by each stage. Races on counters introduce statistical noise in subsequent estimates ($\hat{B}(x)$ slightly less accurate) but do not produce a decision that relaxes a prior restriction within the same evaluation: the decision is computed from the locally read state and committed as a conjunction with prior stages.

\paragraph{Observation 3 --- Interventions are monotonic writers.} The Autopilot runs in a separate thread and can tighten thresholds while the engine is mid-evaluation. Its update contract is tighten-only (section~\ref{sec:autopilot}): it can never loosen $\tau$ or relax a constraint. An evaluation that reads a post-intervention threshold sees a value no looser than a pre-intervention read; one that reads a pre-intervention threshold completes under the stricter prior regime. In neither ordering is $D$ relaxed by the concurrent write.

\paragraph{Residual gaps.} These observations are not a substitute for a formal proof. In particular, they do not cover: (i)~fail-stop behaviour of a stage that hangs without raising an exception (e.g.\ a blocked I/O call within a stateful domain rule), (ii)~profile changes applied mid-evaluation that alter the semantics of $\varepsilon$ between stages, and (iii)~adversarial schedulers that starve locks to produce pathological orderings. A formal treatment addressing each is scoped as future work (section~\ref{sec:limitations}). The operational claim is that, under the locking and tighten-only discipline actually implemented, the regime in which Proposition~\ref{thm:monotonicity} fails is not exercised by the reference implementation, and any residual risk degrades $\hat{B}(x)$ accuracy rather than $D$ monotonicity.

\section{Multi-Step Governance: $RG(x)$ via PlanGovernor}

The per-operation engine estimates $U(x)$ and $SB(x)$ but cannot detect the $RG(x)$ component: fraud patterns that emerge only from the composition of individually legitimate operations.

Four canonical $RG(x)$ patterns:
\begin{itemize}
\item \textbf{Structuring:} $n \geq 3$ transfers to the same destination, each below $\tau_{\text{dual}}$, but summing to $\geq \tau_{\text{dual}}$.
\item \textbf{Account takeover:} Credential-change ticket combined with fund transfer in the same plan.
\item \textbf{Probing:} Strictly increasing amounts to the same destination.
\item \textbf{Round-tripping:} Cycles in the directed transfer graph, detected via DFS.
\end{itemize}

Level~1 (PlanGovernor) and Level~2 (per-step engine) analyze disjoint features:
\begin{equation}
D_{\text{final}}(P) = D_{\text{plan}}(P) \wedge \bigwedge_{i=1}^{n} D_{\text{step}}(s_i)
\label{eq:plan_decision}
\end{equation}

\section{Closed-Loop Autopilot}\label{sec:autopilot}

The Autopilot is the operational instance of Aubin's regulation map $R_{A}(\theta)$ formalised in §\ref{sec:autopilot_regulation} (Definition~\ref{def:ra_policy}): it selects controls from the tighten-only admissible set $R_{A}(\theta) = \{u' \leq u_{\mathrm{current}}\} \cup \{\mathrm{STOP}\}$ of eq.~(\ref{eq:ra_def}), and by Theorem~\ref{thm:viab_ra_nonempty} the regulated viability kernel $\mathrm{Viab}_{R_{A}}(V^{\dagger})$ is non-empty by construction. Concretely, the Autopilot maintains $\hat{\Phi}(x) = S(x) - \hat{B}(x) \geq \varepsilon$ autonomously by adjusting $\hat{B}(x)$ estimation parameters within $R_{A}(\theta)$. Targets requiring tightening (under-detection of growing $\hat{B}(x)$) are corrected automatically; targets requiring relaxation fall outside $R_{A}(\theta)$ by construction and are flagged for human intervention, preserving P3 as the structural content of Corollary~\ref{cor:p3_regulation}.

Three viability-specific triggers activate immediate tightening---i.e., select $u' < u_{\mathrm{current}}$ from $R_{A}(\theta)$: (1)~$VI(t) < 0.3$; (2)~$t^* < t^*_{\text{critical}}$; (3)~$\mathrm{region}_{\text{global}} = \text{exterior}$. When tightening would drive a binding-constraint component below $u_{\min}$ at $\theta \in \partial V$, the policy selects $\mathrm{STOP}$ and the system transitions to the controlled-exit subset of $V^{\dagger}$.

\section{Analytical Validation}\label{sec:analytical_validation}

This section provides analytical coverage arguments: mappings of the $\hat{B}(x)$ framework onto prior empirical failure taxonomies. Worked examples with measured outcomes are outside the scope declared in §\ref{sec:limitations}.

\subsection{Against Agents of Chaos}

Shapira et al.~\cite{shapira2026} documented eleven failure case studies. Table~\ref{tab:agents_chaos} maps each through the $\hat{B}(x)$ lens.

\begin{table}[ht]
\centering
\caption{Coverage mapping of Agents of Chaos failure cases against the $\hat{B}(x)$ components that the RiskGate architecture engages for each.}
\label{tab:agents_chaos}
\small
\begin{tabular}{@{}llll@{}}
\toprule
\textbf{Case Study} & $\hat{B}(x)$ \textbf{Term} & \textbf{Coverage} & \textbf{Mechanism} \\
\midrule
\#1 Disproportionate response & $RG(x)$ & Partial & Domain hooks block destructive ops \\
\#2 Non-owner compliance & $RG(x)$ & Partial & PlanGovernor escalation sequences \\
\#3 Sensitive info disclosure & $RG(x)$ & Partial & Domain PII validators \\
\#4 Resource exhaustion & $U(x)$ & High & Kill switch detects $U(x)$ instability \\
\#5 Denial-of-service & $U(x)$ & High & $U(x)$ stability metrics \\
\#6 Provider values & $SB(x)$ & Partial & BiasDetector $z$-test \\
\#7 Agent harm (gaslighting) & $U(x)$ & Partial & KL drift detects behavioral shift \\
\#8 Identity spoofing & $U(x)$ + static & Partial & Post-compromise $U(x)$ growth \\
\#9 Agent collaboration & $RG(x)$ & Low & Per-agent silos \\
\#10 Agent corruption & $U(x)$ + $RG(x)$ & Partial & Post-corruption drift \\
\#11 Libelous broadcast & $RG(x)$ & Partial & PlanGovernor if tools governed \\
\bottomrule
\end{tabular}
\end{table}

\subsection{Convergence with Agent Drift}

Rath~\cite{rath2026} simulates 847 workflows and documents 42\% task success rate reduction from drift with median onset at 73 interactions (IQR: 52--114). Both frameworks independently select KL divergence for tool-usage drift detection, validating RiskGate's information-theoretic approach to $U(x)$ estimation. Agent Drift's ASI satisfies P1 (monitoring $U(x)$) but not P2 or P3---by the necessity propositions, this leaves the governance loop incomplete. Table~\ref{tab:drift} maps the mitigation strategies from Rath~\cite{rath2026} to $\hat{B}(x)$ terms and their RiskGate equivalents. According to Rath~\cite{rath2026}, the corresponding drift reductions are 51.9\%, 63.0\%, 70.4\%, and 81.5\% respectively for each strategy in isolation and combined; these figures are reproduced from~\cite{rath2026} and are not measurements of RiskGate.

\begin{table}[ht]
\centering
\caption{Agent Drift mitigation strategies and their $\hat{B}(x)$ interpretations. Drift-reduction percentages are reproduced from Rath~\cite{rath2026}; RiskGate Equivalent is our mapping.}
\label{tab:drift}
\small
\begin{tabular}{@{}llll@{}}
\toprule
\textbf{Strategy} & $\hat{B}(x)$ \textbf{Term} & \textbf{RiskGate Equivalent} & \textbf{Drift Reduction} \\
\midrule
Memory Consolidation & $U(x)$ & Dirichlet + sliding windows & 51.9\% \\
Drift-Aware Routing & $U(x)$ stability & Kill Switch + failsafe thresholds & 63.0\% \\
Adaptive Anchoring & $U(x)$ threshold & D-UCB1 + Bayesian references & 70.4\% \\
Combined & All $\hat{B}(x)$ & Full pipeline & 81.5\% \\
\bottomrule
\end{tabular}
\end{table}

\section{Related Work}

\paragraph{LLM Safety and Alignment.} Constitutional AI~\cite{bai2022} and RLHF~\cite{ouyang2022} address alignment at training time. RiskGate operates at deployment time, treating the LLM as a black box; a well-aligned model still requires runtime governance because alignment does not prevent $\hat{B}(x)$ from growing.

\paragraph{Guardrails and Output Filters.} NeMo Guardrails~\cite{rebedea2023} and Guardrails AI provide rule-based output filtering without distributional context, making them vulnerable to slow growth in $U(x)$ and $RG(x)$ attacks. They implement no $\hat{B}(x)$ estimation.

\paragraph{Drift Detection.} ADWIN~\cite{bifet2007} and Page-Hinkley tests are commonly used for concept drift on scalar statistics. RiskGate's dual-channel KL approach estimates $U(x)$ from categorical distributions with Dirichlet-smoothed references, which is more appropriate for governance metadata.

\paragraph{Algorithmic Fairness.} The segment-vs-rest $z$-test extends the standard two-proportion test~\cite{agresti2012} by correcting for self-inclusion bias---critical for accurate $SB(x)$ estimation in skewed populations typical of financial services.

\paragraph{Viability Theory.} Aubin's viability theory~\cite{aubin1991,aubin2011} has been applied to economics, robotics, and climate policy, but not---to our knowledge---to autonomous AI agent governance. The Informational Viability Principle provides the first formal connection between viability theory and the $\hat{B}(x)$ estimation framework.

\paragraph{Runtime Agent Governance.} Kaptein et al.~\cite{kaptein2026} formalize runtime governance with the execution path as the central object, defining compliance policies as deterministic functions $\pi(A, \Pi, s^*, \Sigma) \to [0,1]$ and a fleet-level objective $\mathbb{E}\!\left[\sum v_T\right] \leq B$. RiskGate takes the behavioral state $\theta(t)$ as the primary object instead, which admits formalization within Aubin's viability theory and supports predictive mechanisms ($t^*$) absent from path-based frameworks. Kaptein et al.\ identify behavioral drift as an open problem (\S6.1) and propose connecting fleet-level governance with per-agent guarantees as future work (\S6.2); the AVF addresses both through continuous $U(x)$ monitoring and Theorem~\ref{thm:viab_ra_nonempty}. The frameworks are complementary.

\section{Worked Demonstrations: End-to-End Mathematical Traces}\label{sec:worked_demonstrations}

Two complementary traces illustrate the structural role of each $\hat{B}(x)$ component. §11.1 exercises a structuring attack where $RG(x)$ is decisive and $VI(t)$ remains INTERIOR throughout --- confirming by design that plan-level threats bypass the scalar viability index (§3.6). §11.2 exercises an emergent-bias attack where $RG(x)$ and $U(x)$ remain silent, $SB(x)$ dominates, and $VI(t)$ degrades visibly through $d_{C_2}$, exercising $P_2$ anticipation before boundary crossing. Together the two traces make explicit the structural separation: $VI(t)$ operationalises $\hat{\Phi}(x)$ on the per-request components ($U$ and $SB$), while $RG(x)$ is resolved by the orthogonal plan-level gate. Both traces propagate the published equations under stated parameters, per the scope declared in §\ref{sec:limitations}.

\subsection{Structuring Attack: $RG(x)$ as Protagonist}
\label{sec:demo_structuring}

\subsubsection{Scenario}

An attacker executes five transfers of \$4,800 each to the same destination over 30~minutes. Each is below the dual-approval threshold $\tau_{\text{dual}} = \$5{,}000$; the aggregate \$24,000 constitutes structuring. A static governance system passes all five.

\subsubsection{Phase 0: Initial $\hat{B}(x)$ Baseline}

With $N = 100$ prior operations ($\mathbf{n} = (70, 20, 10)$ for balance/transfer/account-mgmt), the reference distribution is $\mathbf{q} \approx (0.689, 0.204, 0.107)$ and the calibrated threshold $\tau_{\text{tools}} = 0.085$~bits. Initial viability:
\begin{align}
d_{C_1} &= \min(0.995,\; 0.941) = 0.941 \label{eq:demo_dc1}\\[4pt]
VI(t_0) &= 0.5(0.941) + 0.3(0.920) + 0.2(0.833) = 0.913 \quad [\text{Interior}] \label{eq:demo_vi0}
\end{align}

\subsubsection{Phase I: Operations 1--4 ($U(x)$ Accumulation)}

KL divergence on Channel~B grows gradually but stays well below~$\tau$:
\begin{equation}
D_{\mathrm{KL}}(p \| q)\big|_{\mathrm{op.4}} = 0.00184 \text{ bits} \ll 0.085 \text{ bits}
\end{equation}

$VI$ remains in INTERIOR throughout, essentially unchanged from $VI(t_0) = 0.913$: the scalar viability index sees no threat. This is expected by design (§3.3): the danger is in $RG(x)$, which is a plan-level phenomenon invisible to the per-request components of $\hat{\Phi}(x)$. Static per-request evaluation fails for the same reason.

\subsubsection{Phase II: Operation 5 --- $RG(x)$ Convergence}

Before execution, PlanGovernor fires rule \texttt{pg\_structuring}:
\begin{itemize}
\item $n = 5$ transfers to same destination $\geq 3$: \checkmark
\item Each below $\tau_{\text{dual}}$: \checkmark
\item Aggregate \$24,000 $\gg$ \$5,000: \checkmark
\end{itemize}

PlanRiskScorer computes anomaly score $a = 0.96$, $s_{\text{final}} = 1.96 \geq 1.5 \Rightarrow$ blocked. The two-level decision structure makes the detection explicit. First, the per-request pipeline (Proposition~\ref{thm:monotonicity}) evaluates the fifth transfer in isolation:
\begin{equation}
D_{\text{step}} = \underbrace{D_{S1}}_{\text{ALLOW}} \wedge \underbrace{D_{S2}}_{\text{ALLOW}} \wedge \underbrace{D_{S3}}_{\text{ALLOW}} \wedge \underbrace{D_{S8}}_{\text{ALLOW}} \wedge \underbrace{D_{S9}}_{\text{ALLOW}} = \text{ALLOW}
\label{eq:demo_step}
\end{equation}
The per-request pipeline passes: the individual transfer is below threshold, KL drift is negligible, and no bias is detected. Then, the plan-level gate (section~7) evaluates the full sequence:
\begin{equation}
D_{\text{final}} = \underbrace{D_{\text{plan}}}_{\text{BLOCK}} \wedge \underbrace{D_{\text{step}}}_{\text{ALLOW}} = \text{BLOCK}
\label{eq:demo_final}
\end{equation}

\subsubsection{Phase III: Post-Decision Observability}

The block triggers cost attribution within the governance lineage: the stage classification (plan-level block via PlanGovernor) maps to a \texttt{risk\_multiplier} of $0.9$ on the blocked transaction value, and the foregone-fee component applies the configured \texttt{fp\_cost\_rate}. The conservative reward asymmetry of the D-UCB1 optimizer ($r_{FP} = -2.0$ vs.\ $r_{FN} = -1.0$) reflects the same principle that makes this block economically rational in regulated financial settings: the cost of missed fraud vastly exceeds the cost of blocked legitimate transactions.

\paragraph{Key insight.} No single $\hat{B}(x)$ term detected this attack in isolation. $U(x)$ (KL) produces only $\approx 0.002$~bits after 5~operations, two orders of magnitude below $\tau_{\text{tools}} = 0.085$; static rules pass each transfer; $VI$ remains INTERIOR throughout---confirming that the per-request components of $\hat{\Phi}(x)$ saw no threat. It is $RG(x)$, operating at plan-level through PlanGovernor, that provides the decisive block. The two-level architecture is not redundant: it is structurally necessary because $RG(x)$ manifests only in sequential composition, which no per-request estimator can capture. This is the constructive sufficiency argument of section~3.10 made concrete.

\subsection{Emergent Bias Attack: $SB(x)$ as Protagonist}
\label{sec:demo_bias}

The structuring trace of §\ref{sec:demo_structuring} exhibited $RG(x)$ as decisive while $VI(t)$ remained INTERIOR throughout. The complementary failure mode is one where $RG(x)$ and $U(x)$ remain silent and $SB(x)$ alone dominates. This subsection presents that case, exercising $P_2$ anticipation explicitly: $VI(t)$ degrades visibly through $d_{C_2}$, and the Autopilot intervenes \emph{before} the boundary is crossed.

\subsubsection{Scenario}

A credit pre-approval agent processes 500 loan applications. Minority segment share of traffic is $\pi_{\min} = 0.12$. No explicit rule references any demographic attribute. An emergent feedback loop develops: the D-UCB1 threshold optimizer converges to a decision boundary that, under covariate shift in the minority segment's feature distribution, produces a drifting disparity in block rates. Each individual decision passes the per-request pipeline --- static rules do not fire and $U(x)$ barely moves. The failure is visible only in the accumulated segment-vs-rest statistics.

Without intervention, block rates would evolve linearly from $p_{\min}(0) = p_{\mathrm{rest}}(0) = 0.12$ to $p_{\min}(500) = 0.38$ and $p_{\mathrm{rest}}(500) = 0.12$, producing an end-state disparity $\delta(500) = 0.26$, above the threshold $\delta_{\text{bias}} = 0.25$ of the production profile.

\subsubsection{Phase 0: Initial $\hat{B}(x)$ Baseline (op 50)}

With sample disparity $\delta(50) \approx 0.026$, KL divergence $D_{\mathrm{KL}} \approx 0.003$~bits, and stable bandit variance $\sigma \approx 0.5$:
\begin{align}
d_{C_1}(50) &= (0.085 - 0.003)/0.085 = 0.965 \label{eq:bias_dc1_0}\\
d_{C_2}(50) &= (0.25 - 0.026)/0.25 = 0.896 \label{eq:bias_dc2_0}\\
d_{C_3}(50) &= (3.0 - 0.5)/3.0 = 0.833 \label{eq:bias_dc3_0}\\
VI(50) &= 0.5(0.965) + 0.3(0.896) + 0.2(0.833) = 0.918 \quad [\text{Interior}] \label{eq:bias_vi_0}
\end{align}

\subsubsection{Phase I: Silent $SB(x)$ Accumulation (ops 50--300)}

The bandit's adaptive loop interacts with covariate shift, causing $\delta$ to grow at approximately $d\delta/dt \approx 5.2 \times 10^{-4}$ per operation. Table~\ref{tab:bias_trajectory} traces the viability components. Throughout this phase the bias detector cannot fire: below op~250, $n_{\min} < 30 = \texttt{min\_segment\_samples}$, and even with sufficient samples the cumulative $z$-statistic remains below the $p < 0.05$ threshold.

\begin{table}[h]
\centering
\small
\begin{tabular}{rccccccc}
\toprule
op & $\delta$ & $d_{C_1}$ & $d_{C_2}$ & $d_{C_3}$ & $VI$ & $z$ & $p$-value \\
\midrule
100 & 0.052 & 0.965 & 0.792 & 0.833 & 0.887 & --- & --- \\
200 & 0.104 & 0.962 & 0.584 & 0.833 & 0.823 & --- & --- \\
250 & 0.130 & 0.961 & 0.480 & 0.833 & 0.791 & 1.06 & 0.289 \\
300 & 0.156 & 0.960 & 0.376 & 0.833 & 0.760 & 1.43 & 0.152 \\
\bottomrule
\end{tabular}
\caption{Trajectory during silent $SB(x)$ accumulation. $d_{C_1}$ and $d_{C_3}$ stay essentially flat --- $U(x)$ and stability are not engaged --- while $d_{C_2}$ decays linearly. The $z$-test is not yet significant.}
\label{tab:bias_trajectory}
\end{table}

\subsubsection{Phase II: $P_1$/$P_2$ Monitoring and Autopilot Intervention (op 300)}

At op~300, the SVP tracker's OLS regression over the trajectory window ($W = 50$) of $VI$ values yields a slope
\begin{equation}
b = \frac{d\,VI}{dt} \approx w_2 \cdot \beta_{C_2} = 0.3 \times (-2.08 \times 10^{-3}) = -6.24 \times 10^{-4} \text{ per op}
\end{equation}
because only $d_{C_2}$ is declining ($d_{C_1}$ and $d_{C_3}$ contribute zero slope). Applying eq.~(\ref{eq:tstar}):
\begin{equation}
t^* = -\frac{VI(300)}{b} = -\frac{0.760}{-6.24 \times 10^{-4}} \approx 1219 \text{ ops} \label{eq:tstar_bias}
\end{equation}
The aggregate $t^*$ predicts $VI \to 0$ at op~$\approx 1519$---a distant horizon. This reflects the buffering effect of the weighted index: $C_1$ and $C_3$ hold $VI$ well above zero even as $C_2$ deteriorates. This is an inherent limitation of single-scalar anticipation for single-constraint failures (see §\ref{sec:limitations}).

However, $t^*$ is not the only signal. The SVP snapshot also identifies $C_2$\texttt{\_fairness} as the \emph{binding constraint} (smallest distance, $d_{C_2} = 0.376$), and the per-constraint trajectory projects $d_{C_2} < \varepsilon_{\text{fairness}} = 0.05$ (BOUNDARY entry) at op~$\approx 457$ and $d_{C_2} < 0$ (EXTERIOR crossing) at op~$\approx 481$. The structured log emits \texttt{emit\_svp\_viability} with \texttt{binding\_constraint = "fairness"}, the declining $VI$ trend ($b < 0$), and $t^* = 1219$.

The $t^*$-based trigger (trigger~2 of §\ref{sec:autopilot}) does not fire: $t^* = 1219 \gg t^*_{\text{critical}}$. Instead, the Autopilot acts on the \texttt{max\_segment\_disparity} business target. With $\delta(300) = 0.156$ exceeding the configured target threshold, the Autopilot triggers a tighten-only intervention (P3-compliant, Proposition~\ref{prop:p3}): the bandit's effective reward asymmetry $r_{FP}/r_{FN}$ is strengthened and disparity-aware penalization is activated. The intervention restricts the decision surface --- it cannot and does not relax any constraint --- and consequently flattens the disparity growth after op~300.

\subsubsection{Phase III: Counterfactual Projection (no Autopilot intervention)}

To make the value of continuous monitoring explicit, we project the trajectory absent the op-300 intervention and report the four events at which each signal would have fired (Table~\ref{tab:bias_counterfactual}).

\begin{table}[ht]
\centering
\caption{Counterfactual trajectory absent the Autopilot's op-300 intervention. Lead times are relative to op~300.}
\label{tab:bias_counterfactual}
\small
\begin{tabular}{@{}llc@{}}
\toprule
Event & Op & Lead time \\
\midrule
Autopilot disparity-target intervention (actual) & 300 & --- \\
Reactive $z$-test crosses $p < 0.05$ ($\hat p_{\min}=0.226$, $\hat p_{\mathrm{rest}}=0.120$, $z \approx 2.05$) & $\approx 408$ & $+108$ \\
BOUNDARY entry ($d_{C_2} = \varepsilon_{\mathrm{fairness}} = 0.05$) & $\approx 457$ & $+157$ \\
EXTERIOR crossing ($d_{C_2} = 0$) & $\approx 481$ & $+181$ \\
\bottomrule
\end{tabular}
\end{table}

\noindent By op~500 the viability components would read $d_{C_2}(500) = -0.040$ and $VI(500) \approx 0.634$: the region is classified EXTERIOR by the worst-constraint rule (§3.3) because $d_{C_2} < 0$, even though $VI > 0$---the pipeline acts on the binding-constraint signal, not on the sign of $VI$. The cumulative disparity $\delta(500) = 0.26$ would exceed both the production threshold and the disparate-impact ``four-fifths'' ratio of 0.80 (ECOA/CFPB), exposing the operator to precisely the regulatory liability that bias-detection governance is intended to prevent.

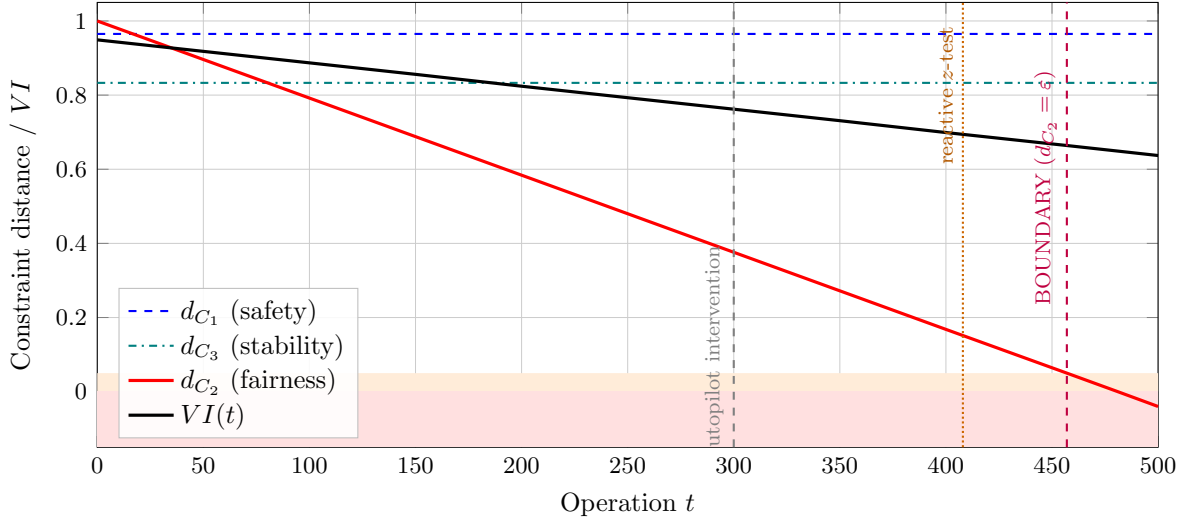
\begin{figure}[h]
\centering
\begin{tikzpicture}
\begin{axis}[
    width=0.95\linewidth,
    height=7.5cm,
    xlabel={Operation $t$},
    ylabel={Constraint distance / $VI$},
    xmin=0, xmax=500,
    ymin=-0.15, ymax=1.05,
    grid=both,
    grid style={line width=.1pt, draw=gray!20},
    major grid style={line width=.2pt, draw=gray!40},
    legend style={at={(0.02,0.02)}, anchor=south west, font=\small, draw=gray!50, fill=white, fill opacity=0.9, text opacity=1},
    legend cell align={left},
    tick label style={font=\footnotesize},
    label style={font=\small},
]
\addplot[draw=none, fill=orange!15, forget plot] coordinates {(0,0) (500,0) (500,0.05) (0,0.05)} \closedcycle;
\addplot[draw=none, fill=red!12, forget plot] coordinates {(0,-0.15) (500,-0.15) (500,0) (0,0)} \closedcycle;
\addplot[blue, thick, dashed] coordinates {(0,0.965) (500,0.965)};
\addlegendentry{$d_{C_1}$ (safety)}
\addplot[teal, thick, dashdotted] coordinates {(0,0.833) (500,0.833)};
\addlegendentry{$d_{C_3}$ (stability)}
\addplot[red, very thick] coordinates {(0,1.000) (50,0.896) (100,0.792) (150,0.688) (200,0.584) (250,0.480) (300,0.376) (350,0.272) (400,0.168) (457,0.050) (481,0.000) (500,-0.040)};
\addlegendentry{$d_{C_2}$ (fairness)}
\addplot[black, very thick] coordinates {(0,0.949) (50,0.918) (100,0.887) (150,0.856) (200,0.824) (250,0.793) (300,0.762) (350,0.731) (400,0.699) (457,0.664) (481,0.649) (500,0.637)};
\addlegendentry{$VI(t)$}
\addplot[gray, dashed, thick, forget plot] coordinates {(300,-0.15) (300,1.05)};
\node[anchor=south, font=\scriptsize, rotate=90, gray] at (axis cs:300,0.10) {Autopilot intervention};
\addplot[purple, dashed, thick, forget plot] coordinates {(457,-0.15) (457,1.05)};
\node[anchor=south, font=\scriptsize, rotate=90, purple] at (axis cs:457,0.55) {BOUNDARY ($d_{C_2} = \varepsilon$)};
\addplot[orange!80!black, densely dotted, thick, forget plot] coordinates {(408,-0.15) (408,1.05)};
\node[anchor=south, font=\scriptsize, rotate=90, orange!70!black] at (axis cs:408,0.80) {reactive $z$-test};
\end{axis}
\end{tikzpicture}
\caption{Viability trajectory under the emergent-bias scenario of §\ref{sec:demo_bias}. $d_{C_1}$ and $d_{C_3}$ remain flat while $d_{C_2}$ decays linearly, dragging $VI(t)$ down. The orange band $[0, 0.05]$ marks the BOUNDARY region; the red band $d_{C_2} < 0$ marks the EXTERIOR. The Autopilot's disparity-target intervention at op~300 acts 157 operations before BOUNDARY entry (op~457) and 108 operations before the reactive $z$-test would have fired (op~408). The aggregate $t^* = 1219$~ops (eq.~\ref{eq:tstar_bias}) predicts $VI \to 0$ at op~$\approx 1519$ --- a distant horizon buffered by healthy $C_1$ and $C_3$.}
\label{fig:bias_trajectory}
\end{figure}

\paragraph{Key insight.} No plan-level pattern detected this attack; no single request violated any rule; $U(x)$ never moved. The block-rate disparity grew continuously as a byproduct of adaptive threshold optimization interacting with covariate shift in the minority segment --- an emergent property of the learning loop itself. It is $SB(x)$, the per-request $z$-test on segment-vs-rest proportions, that \emph{would have} detected the failure reactively; it is the Autopilot's continuous monitoring of the disparity trajectory (a $P_1$ mechanism) that provided 108 operations of lead time before the reactive $z$-test would have fired. The aggregate $t^*$ (eq.~\ref{eq:tstar_bias}) signalled ongoing degradation ($b < 0$) and the SVP tracker identified the binding constraint ($C_2$), but the scalar $t^*$ gave a distant horizon because $C_1$ and $C_3$ buffered $VI$. This illustrates a structural limitation: single-scalar anticipation from $VI(t)$ is inherently conservative for single-constraint failure modes. Together with §\ref{sec:demo_structuring}, this counterfactual establishes the empirical separation claimed in §3.6: $VI(t) \approx \hat{\Phi}(x)$ operates correctly on the per-request components ($U$ and $SB$) and supports trend monitoring via $b < 0$ and binding-constraint identification, while $RG(x)$ requires the orthogonal plan-level gate. The two demonstrations are complementary by construction --- structuring exercises $RG \to D_{\text{plan}}$; emergent bias exercises $SB \to d_{C_2} \to$ Autopilot disparity target --- and neither mechanism could have prevented the other's failure.

\section{Discussion and Limitations}\label{sec:limitations}

\paragraph{Scope of the contributions.} We state the overall scope once here and do not repeat it at every figure, table, and derivation. The contributions of this paper are a framework (Agent Viability Framework with P1--P3), a theoretical organising principle (Informational Viability Principle), a reference implementation (RiskGate with complete statistical estimators), and analytical evidence of coverage (§\ref{sec:analytical_validation}, §\ref{sec:worked_demonstrations}). The contributions are \emph{not} a quantitative empirical evaluation: we do not report measured detection rates, false positive rates, latencies, monetary ROI, or head-to-head comparisons against baselines on live or synthetic traces. The worked demonstrations trace equations under assumed parameters; the coverage table is an analytical mapping; the ROI is absent by choice. A calibrated empirical evaluation against baselines is scoped for subsequent work. The remaining paragraphs of this section expand on specific limitations within this scope.

\paragraph{Empirical evaluation.} A measured evaluation---synthetic generators for each of the four degradation categories, detection-rate / false-positive-rate / latency curves with confidence intervals, ablations of $U(x)$, $SB(x)$, and $RG(x)$, and head-to-head comparison against stateless policy engines (e.g.\ OPA) and a supervised baseline---is the natural next iteration.

\paragraph{Formal first-passage-time analysis.} Thesis~\ref{thesis:senescence} (bounded senescence) is anchored empirically in observed drift rates~\cite{shapira2026,rath2026} rather than derived from a stochastic model. A formal treatment---choosing an explicit dynamics for $\theta(t)$ (drift + diffusion, Ornstein--Uhlenbeck, or piecewise-stationary jumps), deriving $E[T^*]$ as a function of drift rate, noise variance, and safety margin $\varepsilon$, and calibrating the model against the $73$-interaction median onset of~\cite{rath2026}---would strengthen the thesis from an empirical bound to a model-based prediction. We scope it as future work rather than speculate on the right dynamics without a measured evaluation to inform the choice.

\paragraph{Coverage table reproduction.} The reproduction of the eleven Agents of Chaos cases as deterministic attack generators in the harness---with measured TP/FP/FN, latency to first block, effect on $VI(t)$, and comparison against baseline policy engines---is the concrete next step. A representative subset (resource exhaustion, emergent bias, structuring-style tool misuse) would be a reasonable intermediate milestone.

\paragraph{Economic evaluation.} Monetary ROI figures are absent from the traces by design. A calibrated economic evaluation on real fraud/compliance traces is the appropriate context for such figures.

\paragraph{Monotonicity under concurrency and partial failures.} Proposition~\ref{thm:monotonicity} establishes pipeline monotonicity \emph{by construction} under sequential evaluation. A formal theorem establishing the same property under concurrent evaluation of decisory stages (interleavings over shared estimator state) and under partial stage failures (exceptions, timeouts) is not proved here. Both would require an explicit concurrency model and a formal treatment of fail-secure semantics; we leave them as future work.

\paragraph{Threshold sensitivity.} Hyperparameters ($\kappa$, KL thresholds, reward weights, $\tau_{\mathrm{ks}}$) are domain-dependent. The four built-in profiles and the Autopilot provide reasonable defaults, but initial calibration requires domain expertise.

\paragraph{Cold start.} With insufficient data, Dirichlet priors dominate and UCB1 bandits have not converged; $\hat{B}(x)$ estimates are imprecise. The calibration profile mitigates this.

\paragraph{Adversarial robustness.} A sophisticated adversary who understands the KL threshold structure could craft operations keeping $U(x)$ just below the drift threshold on both channels. Formal adversarial robustness guarantees for $\hat{B}(x)$ estimation remain an open problem. The KL partition itself (\texttt{tool\_name}$\times$\texttt{tier}, \texttt{tool\_calls}$\times$\texttt{tier}) is an operational choice rather than a derivation from first principles: Bonferroni protects the FPR \emph{given} the partition, not against a partition blind to the axis an adversary chooses to exploit (e.g.\ destinations, amounts, or recipients within a fixed tool type, which are invisible to the current grid). The $\hat{B}(x)$ decomposition mitigates this by composition rather than by completeness of any single detector---what $U(x)$ misses must surface in $SB(x)$ or $RG(x)$, as the structuring trace in §\ref{sec:demo_structuring} illustrates ($U(x)$ silent at two orders of magnitude below threshold; $RG(x)$ decisive). Low-cost extensions of $U(x)$ along axes the current partition does not cover are natural next channels and each is a $K{+}1$ addition in the Bonferroni step: (i)~\emph{latency per tool$\times$tier}, which would shift observably under resource exhaustion and denial-of-service---today only caught reactively by the kill switch; (ii)~\emph{cost and consumption} (tokens, nested calls), which catches runaway generation and prompt-injection patterns that induce extra work without being slow; and (iii)~\emph{outcome and error-code distributions}, under which an adversary probing permission boundaries produces a \texttt{permission\_denied} spike before finding a hole, yielding a distributional signature even under irregular probing. None of these closes the adversarial-bound gap, but each reduces the surface on which a partition-aware adversary can remain within $U(x)$ by construction.

\paragraph{$t^*$ limitations.} Three limitations of $t^*$ share a common root---it is a first-order estimator of an aggregate $VI$ trajectory---and all err conservatively. (i)~\emph{Linearity:} Proposition~\ref{prop:tstar_exit} assumes $v(t) = \mathbb{E}[VI(\theta(t))]$ is locally linear; under non-negligible $v''$ the boundary may be crossed before~$t^*$. Region classification is the instantaneous fallback, and the $\rho$ diagnostic (\S\ref{sec:inferential_profile}) warns when curvature becomes significant, but cannot correct the first-order estimate---higher-order and pathwise extensions via stochastic viability kernels~\cite{aubin2011} are flagged as natural follow-ups. (ii)~\emph{Aggregation:} because $t^*$ runs on the weighted average $VI$, it gives distant predictions when only one constraint deteriorates (see §\ref{sec:demo_bias}: $d_{C_2}$ reaches the boundary in $\approx 157$~ops while aggregate $t^* \approx 1219$~ops). Binding-constraint identification (P1) and per-metric business targets carry the actionable signal in this regime; a per-constraint $t^*$ variant fitting OLS on individual $d_{C_i}$ series would resolve this at the cost of three extra predictions per snapshot. (iii)~\emph{Intervention confounding:} $VI(t)$ depends on~$\tau$, so Autopilot tightening drops $d_{C_1}$ even when KL is stable, and the OLS cannot distinguish genuine drift from intervention artefacts; the resulting pessimistic bias is consistent with the fail-safe discipline.

\paragraph{P3: the weakest theoretical link.} Among the three AVF properties, P3 occupies a distinct epistemological position. P1 and P2 are necessary for \emph{computational} reasons: without accumulated cross-request state (P1), $\hat{B}(x)$ is undefined; without monitoring $d\hat{B}/dt$ (P2), violations in high-consequence domains cannot be prevented before they cause harm. These are necessity arguments grounded in the structure of $\hat{B}(x)$ estimation itself.

P3, by contrast, is necessary for \emph{integrity} reasons: it prevents adversarial manipulation of $\hat{B}(x)$ estimates via threshold relaxation. Proposition~\ref{prop:p3} identifies the specific attack vector (adversarial erosion causing $\tau \uparrow$, which inflates $d_{C_1}$ artificially) and shows P3 eliminates it by construction. However, the current argument establishes that P3 blocks one class of integrity attacks---it does not provide worst-case guarantees on $\hat{B}(x)$ estimation error under bounded adversarial perturbation. The gap between ``eliminates the known attack by construction'' and ``provides formal robustness bounds under an adversarial observation model'' remains the most significant open theoretical direction in this work. We flag this not as a weakness of the system---which is conservative by design---but as the frontier where the theoretical contribution requires extension.

\paragraph{P3 and over-restriction.} P3 guarantees the system can never autonomously weaken its defenses, but it also means the Autopilot cannot correct over-restriction. Resolution requires human intervention; D-UCB1 provides a slower, P3-compatible path.

\paragraph{Theoretical completeness vs.\ operational parsimony.} A tension exists between the three-term decomposition of $\hat{B}(x)$---which is theoretically necessary because each term requires a statistically distinct instrument (information-theoretic divergence for $U(x)$, hypothesis testing for $SB(x)$, sequential pattern matching for $RG(x)$)---and the operational complexity of maintaining multiple independent statistical mechanisms across a multi-stage pipeline. A production deployment may legitimately consolidate the three $\hat{B}(x)$ estimators into a single weighted composite:
\begin{equation}
RG_{\text{unified}}(x) = w_1 \cdot \mathrm{Drift}(x) + w_2 \cdot \mathrm{Bias}(x) + w_3 \cdot \mathrm{Sequence}(x)
\label{eq:unified}
\end{equation}
provided the underlying statistical instruments remain distinct at the estimation layer. Such consolidation is an implementation optimization that does not affect the theoretical decomposition---the $\hat{B}(x)$ terms remain conceptually independent even when their outputs are combined into a single scalar for the decision rule. We regard the gap between theoretical completeness and operational minimalism as a feature, not a defect: the decomposition tells you \emph{what} to measure and \emph{why}; the implementation decides \emph{how} to package those measurements for a specific deployment context.

\paragraph{Connection to Aubin's framework.} Three pieces of this work now sit inside Aubin's viability theory. First, the \emph{exit function} $\tau_{\mathrm{exit}}$ (Definition~\ref{def:exit_function}) and the first-order grounding of $t^*$ (Proposition~\ref{prop:tstar_exit}): $t^*$ is the leading-order estimator of the $VI$-based scalar surrogate $\tau^{VI}_{\mathrm{exit}}$ along the mean viability trajectory and, since $\tau^{VI}_{\mathrm{exit}} \geq \tau_{\mathrm{exit}}$ by construction, a conservative estimator of Aubin's $V$-exit itself; the non-linearity diagnostic $\rho$ is the empirical detector of when that first-order truncation ceases to be valid. Second, the \emph{Autopilot as regulation map} $R_{A}(\theta)$ (§\ref{sec:autopilot_regulation}, eq.~\ref{eq:ra_def}): tighten-only admissibility with $\mathrm{STOP}$ as admissible-control-of-last-resort gives a concrete instance of Aubin's $R(x) \subset U(x)$, and Theorem~\ref{thm:viab_ra_nonempty} establishes that the regulated viability kernel $\mathrm{Viab}_{R_{A}}(V^{\dagger})$ is non-empty by construction. Third, P3 is recovered as Corollary~\ref{cor:p3_regulation} rather than imposed as an external axiom. The formalisation surfaces one residual caveat relative to the classical framework: the stronger claim $\mathrm{Viab}_{R_{A}}(V) \neq \emptyset$ (non-empty \emph{without} appeal to $\mathrm{STOP}$) requires a dominance hypothesis $\nu_{\max} \geq C \cdot \beta_{\max}$ on tightening velocity versus drift rate that we flag as operating assumption rather than prove; empirical verification on production traces is scoped as follow-up work. The remaining open direction is the concurrency formalisation of P3 flagged under ``Monotonicity under concurrency and partial failures'' below.

\paragraph{On contract-style frameworks and P3.} Design-by-contract formalizations for autonomous agents, such as Agent Behavioral Contracts~\cite{bhardwaj2026}, decompose obligations as $C = (P, I, G, R)$, where $R$ denotes autonomous recovery, which in threshold-based governance takes the form of threshold relaxation after a detected violation. The declarative $(P, I, G)$ fragment is complementary to our pipeline and would strengthen the specification story flagged under ``Monotonicity under concurrency and partial failures'' in this section. $R$ interpreted as autonomous threshold relaxation, however, is incompatible with P3 by construction: it is precisely the adversarial erosion attack identified in Proposition~\ref{prop:p3}---a sophisticated adversary who can drive the recovery mechanism to lift $\tau$ inflates $d_{C_1}$ artificially and manipulates the very $\hat{B}(x)$ estimate the system uses to govern itself. The two admissible error modes are structurally asymmetric, not tunable: over-restriction is costly but reversible (a legitimate operation is blocked, a human adjudicates, the decision is revised with accountability); under-restriction is cheap but irreversible (harm is consumed, data is exfiltrated, fraud settles). P3 encodes this asymmetry as an invariant of $\hat{B}(x)$ integrity, which is why it is classified as the integrity pillar rather than the computational one (``P3: the weakest theoretical link'' above). We therefore integrate contract-style specifications only by reinterpreting $R$ as escalation rather than relaxation: a failed contract triggers $\mathrm{STOP}$, structured logging, and human notification---never autonomous threshold modification. The two P3-compatible paths to reduced restriction remain available---human-in-the-loop adjustment (trusted, auditable) and D-UCB1 threshold learning (evidence-driven, slow, §\ref{sec:autopilot})---and neither reacts to a single assertion failure. This design choice preserves the integrity argument of Proposition~\ref{prop:p3} while absorbing the declarative ergonomics that motivate recovery-based frameworks in benign deployments.

\section{Conclusion}

This paper has made two contributions. The first is theoretical: the \textbf{Informational Viability Principle}, which establishes that governing an autonomous agent is equivalent to continuously estimating $\hat{B}(x) = U(x) + SB(x) + RG(x)$ and acting only when $S(x) - \hat{B}(x) \geq \varepsilon$. This principle provides a unified motivation for the three AVF properties: P1 is necessary because $\hat{B}(x)$ requires accumulated cross-request state; P2 is necessary because preventing violations in high-consequence domains requires monitoring $d\hat{B}/dt$; P3 is motivated by the need to preserve the integrity of $\hat{B}(x)$ estimates under adversarial observation.

The second contribution is constructive: RiskGate, a concrete system that satisfies all three AVF properties by decomposing $\hat{B}(x)$ into its three terms and implementing dedicated statistical estimators for each. A monotonic pipeline ($D_{\text{final}} = \bigwedge_{i \in \mathcal{D}} D_i$) and a closed-loop Autopilot maintain $\hat{\Phi}(x) \geq \varepsilon$ autonomously. The scalar $VI(t) \in [-1, +1]$ operationalizing $\hat{\Phi}(x)$ with $t^*$ prediction transforms governance from reactive to predictive.

The worked demonstration traces a structuring attack showing how no single $\hat{B}(x)$ term detects the attack in isolation while their composition blocks it at plan-level. Validation against the Agents of Chaos failure taxonomy and convergence with Agent Drift measurements (81.5\% drift reduction with combined strategies, reported by Rath~\cite{rath2026}) confirm that agent degradation is an empirically observed phenomenon demanding multi-term adaptive $\hat{B}(x)$ estimation.

The Informational Viability Principle also provides a principled evaluation criterion for future governance approaches: any proposed mechanism should be evaluated by which term of $\hat{B}(x)$ it improves and whether it maintains P3 compliance. This transforms the governance design space from an open-ended collection of heuristics into a structured research program.

\paragraph{Open directions.} (1)~Reproducible benchmarks---a public dataset of agent operation sequences with labeled $\hat{B}(x)$ component values. (2)~Multi-agent viability---extending $V_{\text{system}} = \bigcap_i V_i$ with inter-agent coupling terms in $RG(x)$. (3)~Formal adversarial bounds---game-theoretic analysis of the attacker-governance interaction under P3 constraints on $\hat{B}(x)$ estimation, specifically addressing the integrity gap identified in section~12.

\section*{Acknowledgments}

The author gratefully acknowledges Prof. Dr. Maurits Kaptein, Juan Pablo Garcia, Claudio Aldana, and Rodrigo Alonso for their valuable feedback and support.


\end{document}